\newcommand{\eg}{\emph{e.g.}}
\newcommand{\ie}{\emph{i.e.}}
\newcommand{\etal}{\emph{et~al.}}
\newtheorem{theorem}{Theorem}
\title{Efficient Communication in Multi-Agent Reinforcement Learning via Variance Based Control}
\author{
  Sai Qian Zhang \\
  Harvard University\\
   \And
   Qi Zhang \\
   Amazon Inc. \\
   \And
   Jieyu Lin \\
   University of Toronto \\
}
\begin{document}

\maketitle

\begin{abstract}
Multi-agent reinforcement learning (MARL) has recently received considerable attention due to its applicability to a wide range of real-world applications. However, achieving efficient communication among agents has always been an overarching problem in MARL. In this work, we propose Variance Based Control (VBC), a simple yet efficient technique to improve communication efficiency in MARL. By limiting the variance of the exchanged messages between agents during the training phase, the noisy component in the messages can be eliminated effectively, while the useful part can be preserved and utilized by the agents for better performance. 
Our evaluation using a challenging set of StarCraft II benchmarks indicates that our method achieves $2-10\times$ lower in communication overhead than state-of-the-art MARL algorithms, while allowing agents to better collaborate by developing sophisticated strategies.
\end{abstract}
\section{Introduction}

Many real-world applications (\eg, autonomous driving~[16], game playing~[12] and robotics control~[9]) today require reinforcement learning tasks to be carried out in multi-agent settings. 
In MARL, multiple agents interact with each other in a shared environment. Each agent only has access to partial observations of the environment, and needs to make local decisions based on partial observations as well as both direct and indirect interactions with the other agents.
This complex interaction model has introduced numerous challenges for MARL. In particular, during the training phase, each agent may dynamically change its strategy, causing dynamics in the surrounding environment and instability in the training process. Worse still, each agent can easily overfit its strategy to the behaviours of other agents~[11], which seriously deteriorates the overall performance. 


In the research literature, there have been three lines of research that try to mitigate the instability and inefficiency caused by decentralized execution. The most common approach is independent Q-learning (IQL)~[20], which breaks down a multi-agent learning problem into multiple independent single-agent learning problems, and makes each agent learn and act independently. Unfortunately, this approach does not account for instability caused by 
environment dynamics, and therefore often leads to collided actions during the execution. 
The second approach adopts the centralized training and decentralized execution~[18] paradigm, where a joint action value function is learned during the training phase to better coordinate the agents' behaviours. During execution, each agent acts independently without direct communication. The third approach introduces communication among agents during execution~[17,3]. This approach allows each agent to dynamically adjusts its strategy based on its local observation along with the information received from the other agents. Nonetheless, it introduces
additional communication overhead in terms of latency and bandwidth during execution, and its success is heavily dependent on the usefulness of the received information.

In this work, we leverage the advantages of both the second and third approach.
Specifically, we consider a fully cooperative scenario where all the agents collaborate to achieve a common objective. The agents are trained in a centralized fashion within the multi-agent Q-learning framework, and are allowed 
to communicate with each other during execution. However, 
unlike previous work, we have a few key insights. First, for many applications, it is often superfluous for an agent to wait for feedback from all surrounding agents before making an action decision. For instance, when the front camera on a autonomous vehicle detects an obstacle within the dangerous distance limit, it triggers the `brake` signal without waiting for the feedback from the other parts of the vehicle. Second, the feedback received from the other agents may not always provide useful information. For example, the navigation system of the autonomous vehicle should pay more attention to the messages sent by the perception system (\eg, camera, radar), and less attention to the entertainment system inside the vehicle before taking its action. Full communication among the agents leads to a large communication overhead and latency, which is impractical for the real system implementation with strict latency requirement and bandwidth limit (e.g., real-time traffic signal control, autonomous driving, etc). In addition, as pointed out by~[7], excessive amount of communication may introduce useless and even harmful information which could even impair the convergence of learning.

Motivated by these observations, we design a novel deep MARL architecture that dramatically improves inter-agent communication efficiency.
Specifically, 
we introduce Variance Based Control (VBC), a simple yet efficient approach to reduce the information transferred between the agents. By inserting an extra loss term on the variance of the exchanged information, the valuable and informative part inside the messages can be effectively extracted and leveraged to benefit the training of each individual agent. Unlike previous work, we do not introduce an extra decision module to dynamically adjust the communication pattern. This allows us to reduce the model complexity significantly. Instead, each agent first makes a preliminary decision based on its local information, and initiates communication only when its confidence level on this preliminary decision is low. Similarly, upon receiving the communication request, the agent replies to the request only when its message is informative. This architecture significantly improves the performance of the agents and reduces the communication overhead during the execution. Furthermore, it can be theoretically shown that the resulting training algorithm achieves guaranteed stability. 

For evaluation, we test VBC on the StarCraft Multi-Agent Chanllenge~[15], the results demonstrate that VBC achieves $20\%$ higher for winning rates and $2-10\times$ lower for communication overhead on average compared with the other benchmark algorithms. A video demo is available at~[2] for better illustration of the VBC performance. The code is available at \url{https://github.com/saizhang0218/VBC}.

\section{Related Work}
The simplest training method for MARL is to make each agent learn independently using Independent Q-Learning (IQL)~[20]. Although IQL is successful in solving simple tasks such as Pong~[19], it ignores the environment dynamics arose from the interactions among the agents. As a result, it suffers from the problem of poor convergence, making it difficult to handle advanced tasks.

Given the recent success on deep Q-learning~[12], some work explores the scheme of centralized training and decentralized execution. Sunehag~\etal~[18] propose Value Decomposition Network (VDN), a method that acquires the joint action value function by summing all the action value functions of each agent. All the agents are trained as a whole by updating the joint action value functions iteratively. QMIX~[14] sheds some light on VDN, and utilizes a neural network to represent the joint action value function as a function of the individual action value functions and the global state information. The authors of~[10] extends the actor-critic methods to the multi-agent scenario. By performing centralized training and decentralized execution over the agents, the agents can better adapt to the changes on the environment and collaborate with each other. Foerster~\etal~[5] propose counterfactual multi-agent policy gradient (COMA), which employs a centralized critic function to estimate the action value function of the joint, and decentralized actor functions to make each agent execute independently. All the aforementioned methods assume no communication between the agents during the execution. As a result, many subsequent approaches, including ours, can be applied to improve the performance of these methods. 

Learning the communication pattern for MARL is first proposed by Sukhbaatar~\textit{et.~al.}~[17]. The authors introduce \emph{CommNet}, a framework that adopts continuous communication for fully cooperative tasks. During the execution, each agent takes their internal states as well as the means of the internal states of the rest agents as the input to make decision on its action. 
The BiCNet~[13] uses a bidirectional coordinated network to connect the agents. However, both schemes require all-to-all communication among the agents, which can cause a significant communication overhead and latency.  

Several other proposals~[3,7,8] use a selection module to dynamically adjust the communication pattern between the agents. In~[3], the authors propose DIAL (Differentiable Inter-Agent Learning), the messages produced by an agent are selectively sent to the neighboring agents by the discretise/regularise unit (DRU). By jointly training DRU with the agent network, the communication overhead can be efficiently reduced. Jiang~\textit{et.~al.}~[7] propose an attentional communication model that learns when the communication is required and how to aggregate the shared information. However, an agent can only talk with the agents in its observable field at each timestep. This limits the speed of information propagation, and restricts the possible communication patterns when the local observable field is small. Kim~\textit{et.~al.} ~[8] propose a communication scheduling scheme for wireless environment, but only part of agents can broadcast their messages at each time.
In comparison, our approach does not apply the hard constraints on the communication pattern, which is advantageous to the learning process. Also our method does not adopt additional decision module for the communication scheduling, which greatly reduces the model complexity.

\section{Background}
\textbf{Deep Q-networks:}
We consider a standard reinforcement learning problem based on Markov Decision Process (MDP). At each timestamp $t$, the agent observes the state $s_{t}$, and chooses an action $a_{t}$. It then receives a reward $r_{t}$ for its action $a_{t}$ and proceeds to the next state $s_{t+1}$. The goal is to maximize the total expected discounted reward $R = \sum_{t=1}^{T} \gamma^{t} r_{t}$, where $\gamma\in [0,1]$ is the discount factor.
Deep Q-Networks (DQN) use a deep neural network to represent the action value function $Q_{\theta}(s,a) = E[R_{t}|s_{t} = s, a_{t} = a]$, where $\theta$ represents the parameters of the neural network, and $R_{t}$ is the total rewards received at and after $t$. During the training phase, a replay buffer is used to store the transition tuples $\big \langle s_{t},a_{t},s_{t+1},r_{t}\big \rangle$.
The action value function $Q_{\theta}(s,a)$ can be trained recursively by minimizing the loss $L = E_{s_{t},a_{t},r_{t},s_{t+1}}[y_{t}-Q_{\theta}(s_{t},a_{t})]^{2}$, where $y_{t} = r_{t} + \gamma max_{a_{t+1}} Q_{\theta'}(s_{t},a_{t+1})$ and $\theta'$ represents the parameters of the \emph{target network}. 
An action is usually selected with $\epsilon$-greedy policy. Namely, selecting the action with maximum action value with probability $1-\epsilon$, and choosing a random action with probability $\epsilon$.

\textbf{Multi-agent deep reinforcement learning:}
We consider an environment with $N$ agents work cooperatively to fulfill a given task.
At timestep $t$, each agent $i$ ($1\leq i \leq N$) receives a local observation $o_{i}^{t}$ and executes an action $a_{i}^{t}$. They then receive a joint reward $r_{t}$ and proceed to the next state. We use a vector $\textbf{a}_{t}=\{a_{i}^{t}\}$ to represent the joint actions taken by all the agents. The agents aim to maximize the joint reward by choosing the best joint actions $\textbf{a}_{t}$ at each timestep $t$. 

\textbf{Deep recurrent Q-networks:}
Traditional DQN generates action solely based on a limited number of local observations without considering the prior knowledge. 
Hausknecht~\etal~[6] introduce Deep Recurrent Q-Networks (DRQN), which models the action value function with a recurrent neural network (RNN). The DRQN leverages its recurrent structure to integrate the previous observations and knowledge for better decision-making.
At each timestep $t$, the DRQN $Q_{\theta}(o^{t}_{i},h^{t-1}_{i},a^{t}_{i})$ takes the local observation $o^{t}_{i}$ and hidden state $h^{t-1}_{i}$ from the previous steps as input to yield action values.

\textbf{Learning the joint Q-function:}
Recent research effort has been made on the learning of joint action value function for multi-agent Q-learning. Two representative works are VDN~[14] and QMIX~[18]. In VDN, the joint action value function $Q_{tot}(\textbf{o}_{t},\textbf{h}_{t-1},\textbf{a}_{t})$ is assumed to be the sum of all the individual action value functions $Q_{tot}(\textbf{o}_{t},\textbf{h}_{t-1},\textbf{a}_{t}) = \sum_{i}Q_{i}(o_{i}^{t}, h_{i}^{t-1},a_{i}^{t})$, where $\textbf{o}_{t} = \{o_{i}^{t}\}$, $\textbf{h}_{t} = \{h_{i}^{t}\}$ and $\textbf{a}_{t} = \{a_{i}^{t}\}$ are the collection of the observations, hidden states and actions of all the agents at timestep $t$ respectively. QMIX employs a neural network to represent the joint value function $Q_{tot}(\textbf{o}_{t},\textbf{h}_{t-1},\textbf{a}_{t})$ as a nonlinear function of $Q_{i}(o_{i}^{t}, h_{i}^{t-1},a_{i}^{t})$ and global state $s_{t}$. 

\section{Variance Based Control}
In this section, we present the detailed design of VBC in the context of multi-agent Q-learning. The main idea of VBC is to achieve the superior performance and communication efficiency by limiting the variance of the transferred messages. Moreover, 
during execution, each agent communicates with other agents only when its local decision is ambiguous. The degree of ambiguity is measured by the difference on top two largest action values. Upon receiving the communication request from other agents, the agent replies only if its feedback is informative, namely the variance of the feedback is high. 
\subsection{Agent Network Design}
\begin{figure}
    \centering
    \includegraphics[width=\columnwidth]{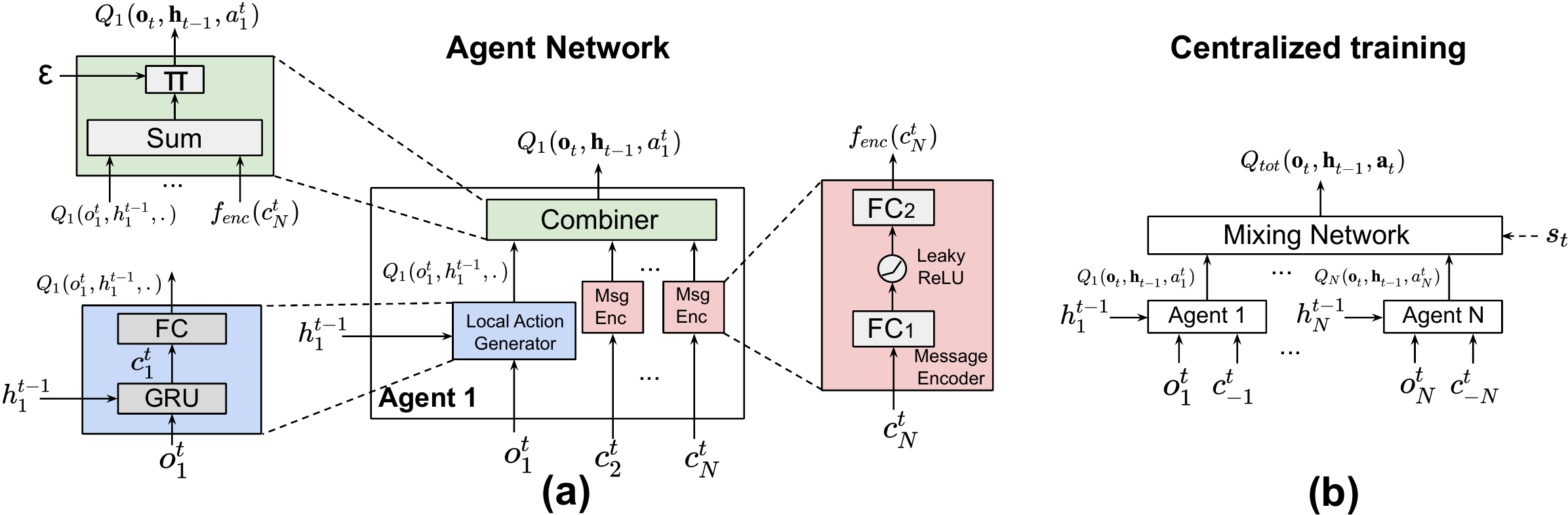}
    \caption{(a) Agent network structure of agent 1, which consists of local agent generator, combiner and several message encoder. (b) The mixing network takes the output $Q_{i}(\textbf{o}_{t},\textbf{h}_{t-1},a_{i}^{t})$ from each network of agent $i$, and perform centralized training. $c^{t}_{-i}$ means all the $c^{t}_{j\neq i}$.}
    \label{fig:network_system}
\end{figure}
The agent network consists of the following three networks: local action generator, message encoder and combiner. Figure~\ref{fig:network_system}(a) describes the network architecture for agent $1$. The \emph{local action generator} consists of a Gated Recurrent Unit (GRU) and a fully connected layer (FC). For network of agent $i$, the GRU takes the local observation $o_{i}^{t}$ and the hidden state $h_{i}^{t-1}$ as the inputs, and generates the intermediate results $c^{t}_{i}$. $c^{t}_{i}$ is then sent to the FC layer, which outputs the local action values $Q_{i}(o_{i}^{t},h_{i}^{t-1},a^{t}_{i})$ for each action $a^{t}_{i}\in A$, where $A$ is the set of possible actions.
The \emph{message encoder}, $f^{ij}_{enc}(.)$, is a multi-layer perceptrons (MLP) which contains two FC layers and a leaky ReLU layer. The agent network involves multiple independent message encoders, each accepts $c^{t}_{j}, j\neq i$ from another agent, and outputs $f_{enc}^{ij}(c^{t}_{j})$. The outputs from local action generator and message encoder are then sent to the \emph{combiner}, which produces the global action value function $Q_{i}(\textbf{o}_{t},\textbf{h}_{t-1},a_{i}^{t})$  of agent $i$ by taking into account the global observation $\textbf{o}_{t}$ and global history $\textbf{h}_{t-1}$. To simplify the design and reduce model complexity, we do not introduce extra parameters for the combiner. Instead, we make the dimension of the $f^{ij}_{enc}(c^{t}_{j})$ the same as the local action values $Q_{i}(o_{i}^{t},h_{i}^{t-1},.)$, and hence the combiner can simply perform elementwise summation over its inputs, namely $Q_{i}(\textbf{o}_{t},\textbf{h}_{t-1},.) = Q_{i}({o}_{i}^{t},{h}_{i}^{t-1},.) + \sum_{j\neq i} f^{ij}_{enc}(c^{t}_{j})$. The combiner chooses the action with the $\epsilon$-greedy policy, $\pi(.)$. 
Let $\theta_{local}^{i}$ and $\theta_{enc}^{ij}$ denote the set of parameters of the local action generators and the message encoders, respectively. To prevent the lazy agent problem~[18] and decrease the model complexity, we make $\theta_{enc}^{ij}$ is the same for all $i$ and $j(j\neq i)$, and also make $\theta_{local}^{i}$ the same for all $i$. Accordingly, we can drop the corner scripts and use $\boldsymbol\theta = \{\theta_{local},\theta_{enc}\}$ and $f_{enc}(.)$ to denote the agent network parameters and the message encoder. 
\subsection{Loss Function Definition}
During the training phase, the message encoder and local action generator learn jointly to generate the best estimation on the action values. More specifically, we employ a mixing network (shown in Figure~\ref{fig:network_system}(b)) to aggregate the global action value functions $Q_{i}(\textbf{o}_{t},\textbf{h}_{t-1},a_{i}^{t})$ from each agents $i$, and yields the joint action value function, $Q_{tot}(\textbf{o}_{t},\textbf{h}_{t-1},\textbf{a}_{t})$. 
To limit the variance of the messages from the other agents, we introduce an extra loss term on the variance of the outputs of the message encoders $f_{enc}(c^{t}_{j})$. The loss function during the training phase is defined as: 
\begin{equation}
    \small
    L(\theta_{local},\theta_{enc}) = \sum_{b=1}^{B}\sum_{t=1}^{T}\big[(y^{b}_{tot}-Q_{tot}(\textbf{o}_{t}^{b},\textbf{h}_{t-1}^{b},\textbf{a}_{t}^{b};\boldsymbol\theta))^{2} + \lambda \sum_{i=1}^{N} Var(f_{enc}(c^{t,b}_{i}))\big]
    \label{eqn:loss}
\end{equation}
where $y^{b}_{tot} = r_{t}^{b} + \gamma max_{\textbf{a}_{t+1}}Q_{tot}(\textbf{o}_{t+1}^{b},\textbf{h}_{t}^{b},\textbf{a}_{t+1};\boldsymbol\theta^{-})$, $\boldsymbol\theta^{-}$ is the parameter of the target network which is copied from the $\boldsymbol\theta$ periodically, $Var(.)$ is the variance function and $\lambda$ is the weight of the loss on it. $b$ is the batch index. 
The replay buffer is refreshed periodically by running each agent network and selecting the action which maximizes  $Q_{i}(o_{i}^{t},h_{i}^{t-1},.)$. 

\subsection{Communication Protocol Design}
During the execution phase, for every timestep $t$, the agent $i$ first computes the local action value function $Q_{i}(o_{i}^{t},h_{i}^{t-1},.)$ and $f_{enc}(c_{i}^{t})$. It then measures the confidence level on the local decision by computing the difference between the largest and the second largest element within the action values. An example is given in Figure~\ref{fig:comm_protocol}(a). Assume agent 1 has three actions to select, and the output of the local action generator of agent $1$ is $Q_{1}(o_{1}^{t},h_{1}^{t-1},.) = (0.1,1.6,3.8)$, and the difference between the largest and the second largest action values is $3.8-1.6 = 2.2$, which is greater than the threshold $\delta_{1} = 1.0$. Given the fact that the variance of message encoder outputs $f_{enc}(c^{t}_{j})$ from the agent 2 and 3 is relatively small, due to the additional penalty term on variance in equation~\ref{eqn:loss}, it is highly possible that the global action value function $Q_{1}(\textbf{o}_{1}^{t},\textbf{h}_{1}^{t-1},.)$ also has the largest value in its third element. Therefore agent 1 does not have to talk to other agents to acquire $f_{enc}(c^{t}_{j})$. On the other hand, agent 1 broadcasts a request to ask for help if its confidence level on the local decision is low. Because the request does not contain any actual data, it consumes very low bandwidth. Upon receiving the request, only the agents whose message has a large variance reply (Figure~\ref{fig:comm_protocol}(b)), since their messages may change the current action decision at agent 1. This protocol not only reduces the communication overhead considerably, but also eliminates less informative messages which may impair the performance. Each agent only consults with the other agents when its confidence level on the local decision is low, and the other agents only reply when their messages can potentially change the final decision. The detailed protocol and operations performed at the agent unit is summarized in Algorithm~\ref{alg:communication_proto}. 
\begin{figure}
    \centering
    \includegraphics[width=\columnwidth]{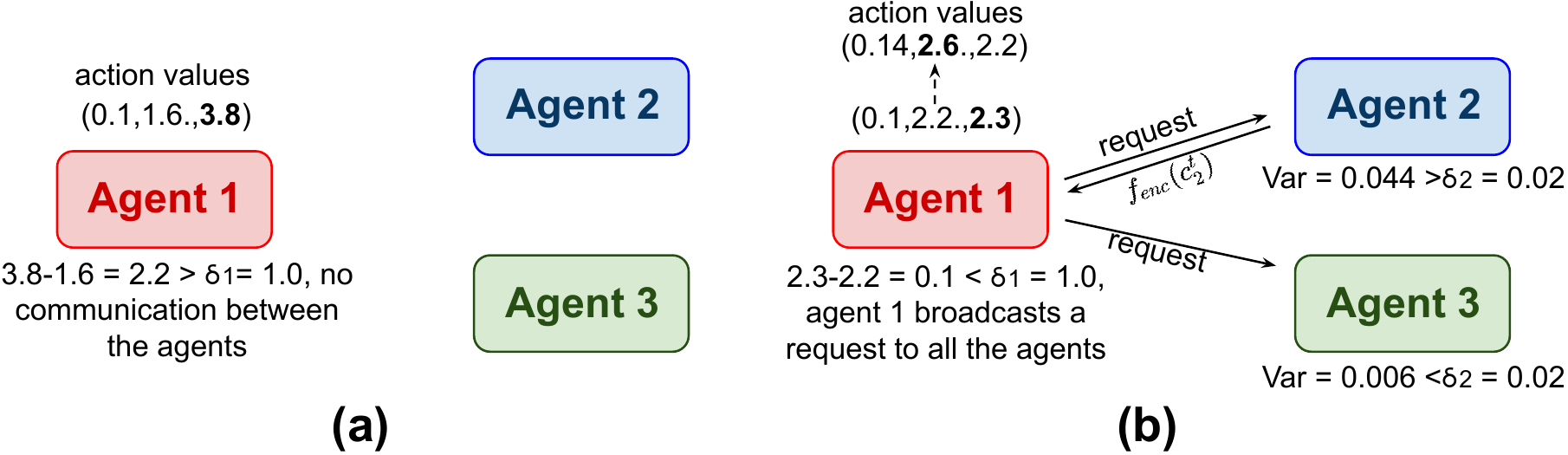}
    \caption{An example on communication protocol of the agents during execution.}
    \label{fig:comm_protocol}
\end{figure}
\begin{algorithm}[tp!]
{
  \small
  \textbf{Input}: Threshold on confidence of local actions
  $\delta_{1}$, threshold on variance of message encoder output $\delta_{2}$. Total number of agents N.\\
    \For {$t\in T$}{
         $\backslash\backslash$ \textbf{Decision on the action of itself}: \\
         Compute local action values $Q_{i}(o_{i}^{t},h_{i}^{t-1},.)$. Denote $m_{1},m_{2}$ the top two largest values of $Q_{i}(o_{i}^{t},h_{i}^{t-1},.)$. \\
        \If {$m_{1}-m_{2}\geq \delta_{1}$}{
            Let $Q_{i}(\textbf{o}_{t},\textbf{h}_{t-1},.)= Q_{i}(o_{i}^{t},h_{i}^{t-1},.)$.
        }
        \Else{
            Broadcast a request to the other agents, and receive the $f_{enc}(c_{j}^{t})$ from $N_{reply} (N_{reply}\leq N)$ agents. \\
            Let $Q_{i}(\textbf{o}_{t},\textbf{h}_{t-1},.)= Q_{i}(\textbf{o}_{i}^{t},\textbf{h}_{i}^{t-1},.) + \sum_{j=1}^{N_{reply}} f_{enc}(c_{j}^{t})$. 
        }
        $\backslash\backslash$ \textbf{Generation of the exchanged message for the other agents}: \\
        Calculate variance of $f_{enc}(c_{i}^{t})$, if $Var(f_{enc}(c_{i}^{t}))\geq \delta_{2}$, store $f_{enc}(c_{i}^{t})$ in the buffer. \\
        \If {$Var(f_{enc}(c_{i}^{t}))\geq \delta_{2}$ \textbf{and} Receive a request from agent $j$}{
            {
               Reply the request from agent j with $f_{enc}(c_{i}^{t})$. \\
            }
        }        
        
    } 
	\caption{Communication protocol at agent $i$}
	\label{alg:communication_proto}
}
\end{algorithm}
\section{Convergence Analysis}
In this section, we analyze convergence of the learning process with the loss function defined in equation~(\ref{eqn:loss}) under the tabular setting. For the sake of simplicity, we ignore the dependency of the action value function on the previous knowledge $\textbf{h}_{t}$. 
To minimize equation~(\ref{eqn:loss}), given the initial state $Q_{0}$, at iteration $k$, the $q$ values in the table is updated according to the following rule:
\begin{equation}
    \small
    Q_{tot}^{k+1}(\textbf{o}_{t},\textbf{a}_{t}) = Q_{tot}^{k}(\textbf{o}_{t},\textbf{a}_{t}) + \eta_{k}\bigg[r_{t} + \gamma max_{\textbf{a}}Q_{tot}^{k}(\textbf{o}_{t+1},\textbf{a})-Q_{tot}^{k}(\textbf{o}_{t},\textbf{a}_{t}) - \lambda\sum_{i=1}^{N}\frac{\partial Var(f_{enc}(c^{t}_{i}))}{\partial Q_{tot}^{k}(\textbf{o}_{t},\textbf{a}_{t})}\bigg] 
    \label{eqn:update_rule}
\end{equation}
where $\eta_{k}$, $Q_{tot}^{k}(.)$ are the learning rate and the joint action value function at iteration $k$ respectively, and $Q_{tot}^{*}(.)$ is the optimal joint action value function. We have the following result on the convergence of the learning process. A detailed proof is given in the appendix.
\begin{theorem}
    Assume $0\leq \eta_{k} \leq 1$, $\sum_{k}\eta_{k}=\infty$, $\sum_{k}\eta_{k}^{2}<\infty$. Also assume the number of possible actions and states are finite. By performing equation~\ref{eqn:update_rule} iteratively, we have $||Q_{tot}^{k}(\textbf{o}_{t},\textbf{a}_{t})-Q_{tot}^{*}(\textbf{o}_{t},\textbf{a}_{t})||\leq \lambda NG$ $\forall \textbf{o}_{t},\textbf{a}_{t}$, as $k \rightarrow \infty$, where $G$ satisfies $||\frac{\partial Var(f_{enc}(c^{t}_{i}))}{\partial Q_{tot}^{k}(\textbf{o}_{t},\textbf{a}_{t})}||\leq G, \forall i,k,t,\textbf{o}_{t},\textbf{a}_{t}$.
\end{theorem}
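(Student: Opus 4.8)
The plan is to recast the update rule~(\ref{eqn:update_rule}) as a stochastic-approximation recursion for the Bellman error and to reuse the classical convergence argument for tabular Q-learning, treating the variance-gradient term as a bounded perturbation that can displace the fixed point by at most $\lambda N G$. Define the error $\Delta_{k}(\textbf{o}_{t},\textbf{a}_{t}) = Q_{tot}^{k}(\textbf{o}_{t},\textbf{a}_{t}) - Q_{tot}^{*}(\textbf{o}_{t},\textbf{a}_{t})$ and let $H$ denote the Bellman optimality operator, so that $(HQ)(\textbf{o}_{t},\textbf{a}_{t}) = E[r_{t} + \gamma \max_{\textbf{a}} Q(\textbf{o}_{t+1},\textbf{a})]$ and $HQ_{tot}^{*} = Q_{tot}^{*}$. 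Subtracting $Q_{tot}^{*}$ from both sides of~(\ref{eqn:update_rule}) and collecting terms, I would write the recursion in the canonical form
\begin{equation}
    \small
    \Delta_{k+1}(\textbf{o}_{t},\textbf{a}_{t}) = (1-\eta_{k})\Delta_{k}(\textbf{o}_{t},\textbf{a}_{t}) + \eta_{k} F_{k}(\textbf{o}_{t},\textbf{a}_{t}),
    \label{eqn:recursion}
\end{equation}
where $F_{k} = r_{t} + \gamma \max_{\textbf{a}} Q_{tot}^{k}(\textbf{o}_{t+1},\textbf{a}) - Q_{tot}^{*}(\textbf{o}_{t},\textbf{a}_{t}) - \lambda \sum_{i=1}^{N} \partial Var(f_{enc}(c_{i}^{t}))/\partial Q_{tot}^{k}(\textbf{o}_{t},\textbf{a}_{t})$ bundles the usual temporal-difference target with the variance-gradient perturbation.

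Next I would control the conditional mean and variance of $F_{k}$ relative to the natural filtration $\mathcal{P}_{k}$. Taking the conditional expectation annihilates the sampling noise in the TD target and produces the Bellman image, giving $E[F_{k}\mid\mathcal{P}_{k}] = (HQ_{tot}^{k} - HQ_{tot}^{*})(\textbf{o}_{t},\textbf{a}_{t}) - \lambda\sum_{i}\partial Var/\partial Q_{tot}^{k}$. Using the $\gamma$-contraction of $H$ in the sup-norm together with the hypothesis that each variance gradient is bounded by $G$, I obtain the key estimate $\|E[F_{k}\mid\mathcal{P}_{k}]\| \leq \gamma\|\Delta_{k}\| + \lambda N G$. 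Because the state and action spaces are finite, rewards and hence all $Q$-values remain bounded, so the conditional variance satisfies $Var[F_{k}\mid\mathcal{P}_{k}] \leq K(1+\|\Delta_{k}\|^{2})$ for some constant $K$; this is the third ingredient required by the stochastic-approximation lemma.

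With these estimates in hand I would invoke the standard stochastic-approximation result for contracting iterations with a non-vanishing additive bias (the Robbins--Monro/Dvoretzky machinery as specialized to Q-learning by Jaakkola, Jordan and Singh, and by Tsitsiklis). The step-size conditions $0\le\eta_{k}\le1$, $\sum_{k}\eta_{k}=\infty$, $\sum_{k}\eta_{k}^{2}<\infty$, applied per state--action pair so that each entry is updated infinitely often, guarantee that the pure-contraction part drives $\Delta_{k}$ toward zero, while the constant bias $\lambda N G$ prevents exact convergence and instead confines the iterate to a ball. Propagating the bias through the contraction then yields $\limsup_{k\to\infty}\|\Delta_{k}\| \leq \lambda N G$ (up to the usual $1/(1-\gamma)$ factor), which is the claimed bound.

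The main obstacle I anticipate is precisely this non-vanishing bias term: the textbook Q-learning lemma proves convergence to the exact fixed point only because its perturbation vanishes, so it cannot be cited verbatim. I would instead need the ``approximate contraction'' form of the lemma and would have to track carefully how the fixed additive offset $\lambda N G$ is amplified by the geometric series $1+\gamma+\gamma^{2}+\cdots$ arising from repeated application of $H$, in order to land on the stated radius. A secondary point to verify explicitly is that the variance-gradient perturbation is itself bounded, so that $F_{k}$ genuinely has the finite conditional variance the lemma demands; this follows from boundedness of the tabular $Q$-values but deserves to be stated rather than merely inherited from the gradient-norm bound $G$.
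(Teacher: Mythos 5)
Your route is genuinely different from the paper's, and the difference matters for the constant you end up with. The paper does not treat the recursion as a single biased contraction. It splits the error $\delta^{k} = Q^{k}_{tot}-Q^{*}_{tot}$ into two processes, $\delta^{k} = \delta^{k}_{1}+\delta^{k}_{2}$: the first, $\delta^{k+1}_{1} = (1-\eta_{k})\delta^{k}_{1} + \eta_{k}F_{k}$, is driven only by the TD term and is claimed to converge to zero w.p.~1 by the standard tabular Q-learning theorem (Jaakkola et al.\ / Melo); the second, $\delta^{k+1}_{2} = (1-\eta_{k})\delta^{k}_{2} - \eta_{k}\lambda U_{k}$, is driven only by the variance-gradient bias and, since $\|U_{k}\|\leq NG$, satisfies $\|\delta^{k+1}_{2}\|-\lambda NG \leq (1-\eta_{k})\bigl(\|\delta^{k}_{2}\|-\lambda NG\bigr)$, hence settles in $[0,\lambda NG]$. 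The triangle inequality then gives the stated radius $\lambda NG$ with no $1/(1-\gamma)$ factor, precisely because in that decomposition the bias never passes through the Bellman contraction: $\delta_{2}$ is a pure averaging process with bounded increments.

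Measured against the theorem as stated, your proposal therefore has a concrete gap: the single-process ``approximate contraction'' lemma you invoke yields $\limsup_{k}\|\Delta_{k}\| \leq \lambda NG/(1-\gamma)$, as you yourself note parenthetically, and that is strictly weaker than the claimed bound $\lambda NG$; nothing in your framework lets you remove the geometric amplification, so you would not land on the theorem. That said, your instinct is pointing at a real defect in the paper's own argument. The process $\delta_{1}$ is not self-contained: $F_{k}$ is built from the full iterate $Q^{k}_{tot} = Q^{*}_{tot}+\delta^{k}_{1}+\delta^{k}_{2}$, so the contraction condition required by the cited theorem is $\|E[F_{k}\mid\mathcal{P}_{k}]\| \leq \gamma\|\delta^{k}_{1}+\delta^{k}_{2}\|$, not $\gamma\|\delta^{k}_{1}\|$, and since $\delta^{k}_{2}$ does not vanish, Melo's theorem cannot be applied verbatim to conclude $\delta^{k}_{1}\to 0$. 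Feeding the asymptotic bound on $\delta_{2}$ back through the contraction reproduces exactly the $1/(1-\gamma)$ factor you anticipate; indeed, a one-state, one-action example with a constant admissible perturbation $U_{k}\equiv NG$ has its perturbed fixed point at distance $\lambda NG/(1-\gamma)$ from $Q^{*}_{tot}$, which exceeds $\lambda NG$ whenever $\gamma>0$. So to reproduce the paper's proof you would need its two-process decomposition, but the weaker bound your approach produces is the one that survives scrutiny of the coupling.
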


\section{Experiment}
We evaluated the performance of VBC with the StarCraft Multi-Agent Challenge (SMAC)~[15].
StarCraft II~[1] is a real-time strategy (RTS) game that has recently been utilized as a benchmark by the reinforcement learning community~[14,5,3,4]. In this work, we focus on the~\emph{decentralized micromanagement problem} in StarCraft II, which involves two armies, one controlled by the user (i.e. a group of agents), and the other controlled by the build-in StarCraft II AI.
The goal of the user is to control its allied units to destroy all enemy units, while minimizing received damage on each unit. 
We consider six different battle settings. Three of them are \emph{symmetrical battles}, where both the user and the enemy groups consist of 2 Stalkers and 3 Zealots (2s3z), 3 Stalkers and 5 Zealots (2s5z), and 1 Medivac, 2 Marauders and 7 Marines (MMM) respectively. The other three are \emph{unsymmetrical battles}, where the user and enemy groups have different army unit compositions, including: 3 Stalkers for user versus 4 Zealots for enemy (3s$\_$vs$\_$4z), 6 Hydralisks for user versus 8 Zealots for enemy (6s$\_$vs$\_$8z), and 6 Zealot for user versus 24 Zerglings for enemy (6z$\_$vs$\_$24zerg). The unsymmetrical battles is considered to be harder than the symmetrical battles because of the difference in army size.

At each timestep, each agent controls a single unit to perform an action, including move[direction], attack[enemy$\_$id], stop and no-op. Each agent has a limited \emph{sight range} and \emph{shooting range}, where shooting range is less than the sight range. The attack operation is available only when the enemies are within the shooting range. The joint reward received by the allied units equals to the total damage inflicted on enemy units. Additionally, the agents are rewarded 100 extra points after killing each enemy unit, and 200 extra points for killing the entire army. The user group wins the battle only when the allied units kill all the enemies within the time limit. Otherwise the built-in AI wins. 
The input observation of each agent is a vector consists the following information of each allied unit and enemy unit in its sight range: relative $x$, $y$ coordinates, relative distance and agent type. For the detailed game settings, hyperparameters, and additional experiment evaluation over other test environment, please refer to appendix.

\subsection{Results}
We compare VBC and other banchmark algorithms, including
VDN~[18], QMIX~[14] and SchedNet~[8], for controlling allied units. We consider two types of VBCs by adopting the mixing networks of VDN and QMIX, denoted as VBC+VDN and VBC+QMIX. The mixing network of VDN simply computes the elementwise summation across all the inputs, and the mixing network of QMIX deploys a neural network whose weight is derived from the global state $\textbf{s}_{t}$. The detailed architecture of this mixing network can be found in~[14]. Additionally, we remove the penalty in Equation~(\ref{eqn:loss}), and drop the limit on variance during the execution phase (\ie, $\delta_{1} = \infty$ and $\delta_{2} = -\infty$). The agents are trained with the same network architecture shown in Figure~(\ref{fig:network_system}), and the mixing network of VDN is used. We call this algorithm FC (full communication). For SchedNet, at every timestep only $K$ out of $N$ agents can broadcast their messages by using $\emph{Top(k)}$ scheduling policy~[8]. $K$ changes with different battles and we usually set $K$ close to $0.5N$, that is, each time roughly half of the allied units can broadcast their messages. The VBC are trained for different number of episodes based on the difficulties of the battles, which we describe in detail next.

To measure the speed of convergence for each algorithm, we stop the training process and save the current model every 200 training episodes. We then run 20 test episodes and measure the winning rates for these 20 episodes. For VBC+VDN and VBC+QMIX, the winning rates are measured by running the communication protocol described in Algorithm~\ref{alg:communication_proto}. For easy tasks, namely MMM and 2s$\_$vs$\_$3z, we train the algorithms with $2$ million and $4$ million episodes respectively. For all the other tasks, we train the algorithms with $10$ million episodes. Each algorithm is trained 15 times. Figure~\ref{fig:winning_rates} shows the average winning rate and 95$\%$ confidence interval of each algorithm for all the six tasks. 
For hyperparameters used by VBC (\ie, $\lambda$ used in equation~(\ref{eqn:loss}), $\delta_{1},\delta_{2}$ in Algorithm~\ref{alg:communication_proto}), we first search for a coarse parameter range based on random trial, experience and message statistics. We then perform a random search within a smaller hyperparameter space. Best selections are shown in the legend of each figure.

We notice that the algorithms that involve communication (\ie, SchedNet, FC, VBC) outperform the algorithms without communication (\ie, VDN, QMIX) in all the six tasks. This is a clear indication that communication benefits the performance. Moreover, both VBC+VDN and VBC+QMIX achieve better winning rates than SchedNet, because SchedNet only allows a fixed number of agents to talk at every timestep, which prohibits some key information to exchange timely. Finally, VBC achieves a similar performance as FC and even outplays FC for some tasks (\eg, 2s3z,6h$\_$vs$\_$8z, 6z$\_$vs$\_$24zerg). This is because a fair amount of communication between the agents are noisy and redundant. By eliminating these undesired messages, VBC is able to achieve both communication efficiency and performance gain. 

\begin{figure*}[tp!]
    \centering
    \begin{subfigure}{.333\textwidth}
        \centering
        \includegraphics[width=0.92\linewidth, height=0.7\linewidth]{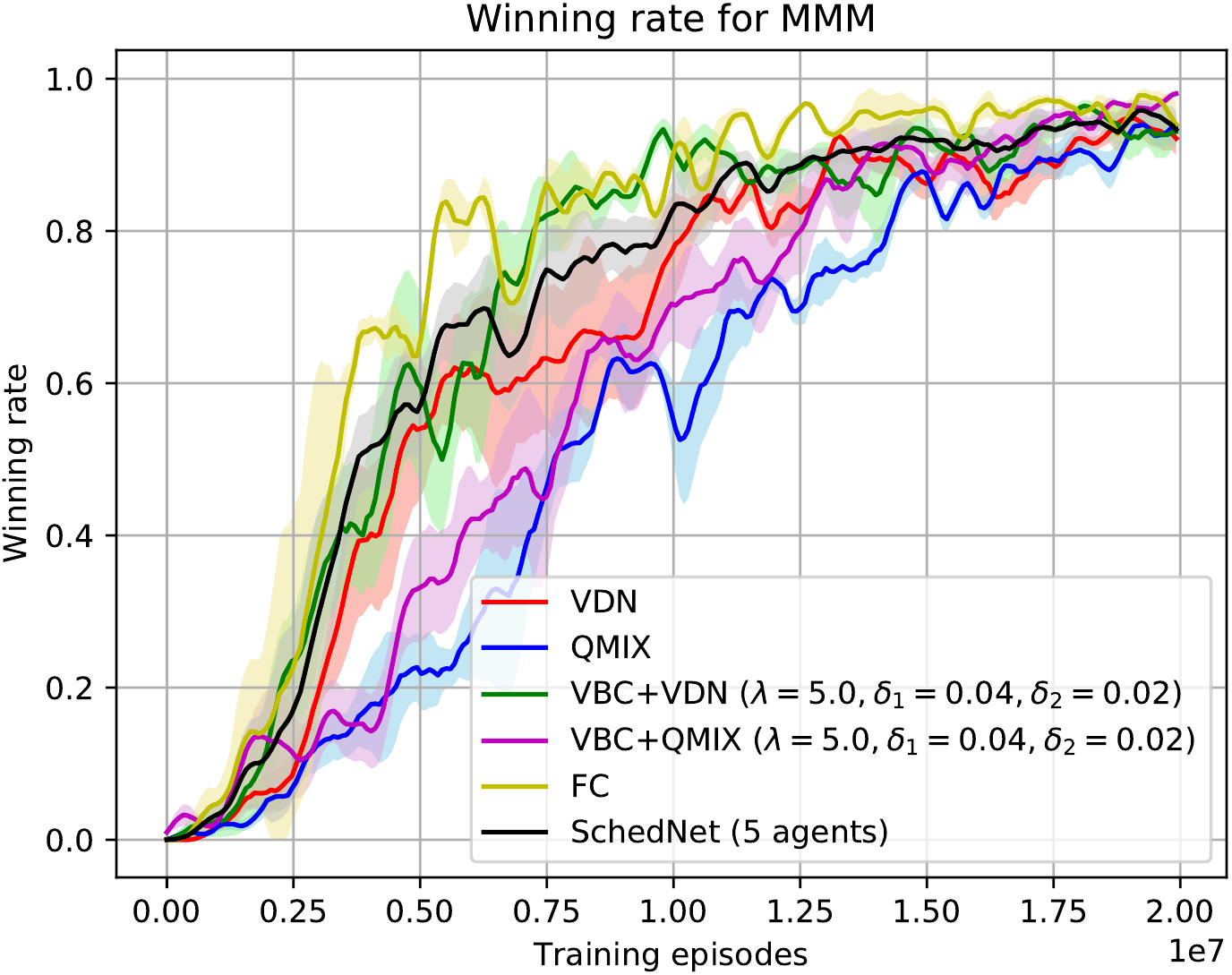}
        \subcaption{MMM}\label{fig:MMM}
    \end{subfigure}%
    \begin{subfigure}{0.33\textwidth}
        \centering
        \includegraphics[width=0.92\linewidth, height=0.7\linewidth]{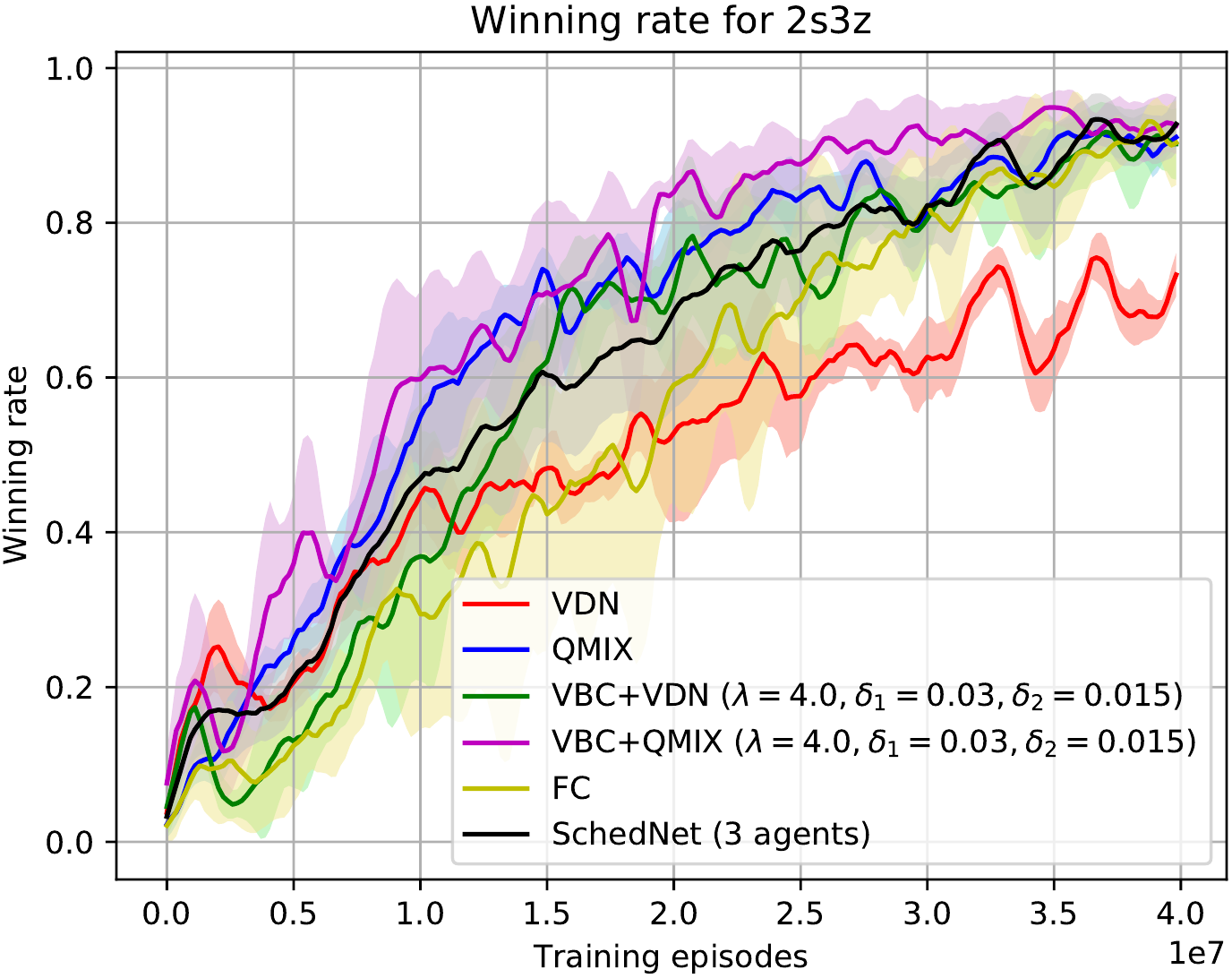}
        \subcaption{2s3z}\label{fig:2s3z}
    \end{subfigure}
    \begin{subfigure}{0.33\textwidth}
        \centering
        \includegraphics[width=0.92\linewidth, height=0.7\linewidth]{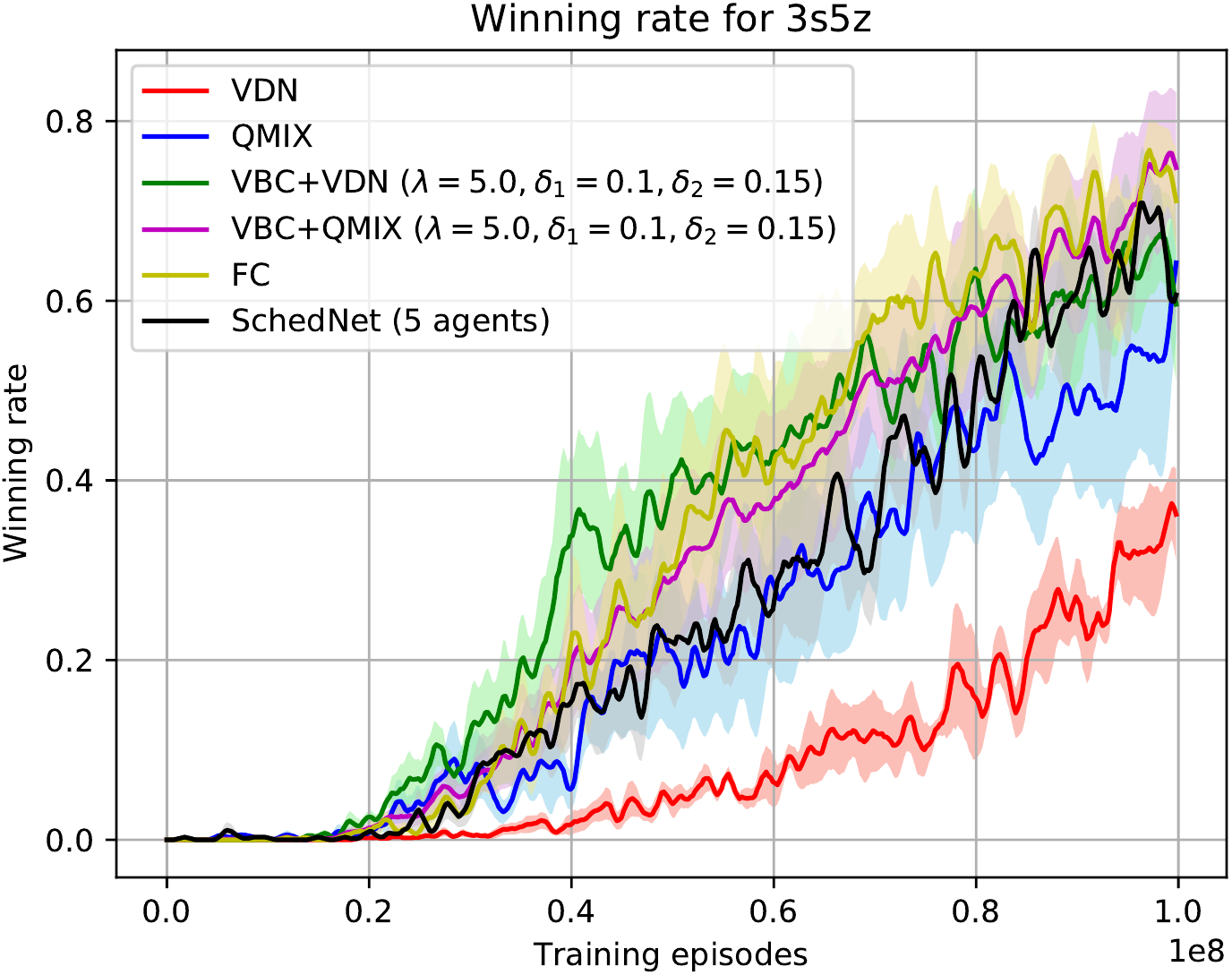}
        \subcaption{3s5z}\label{fig:3s5z}
    \end{subfigure}
    \begin{subfigure}{.333\textwidth}
        \centering
        \includegraphics[width=0.92\linewidth, height=0.7\linewidth]{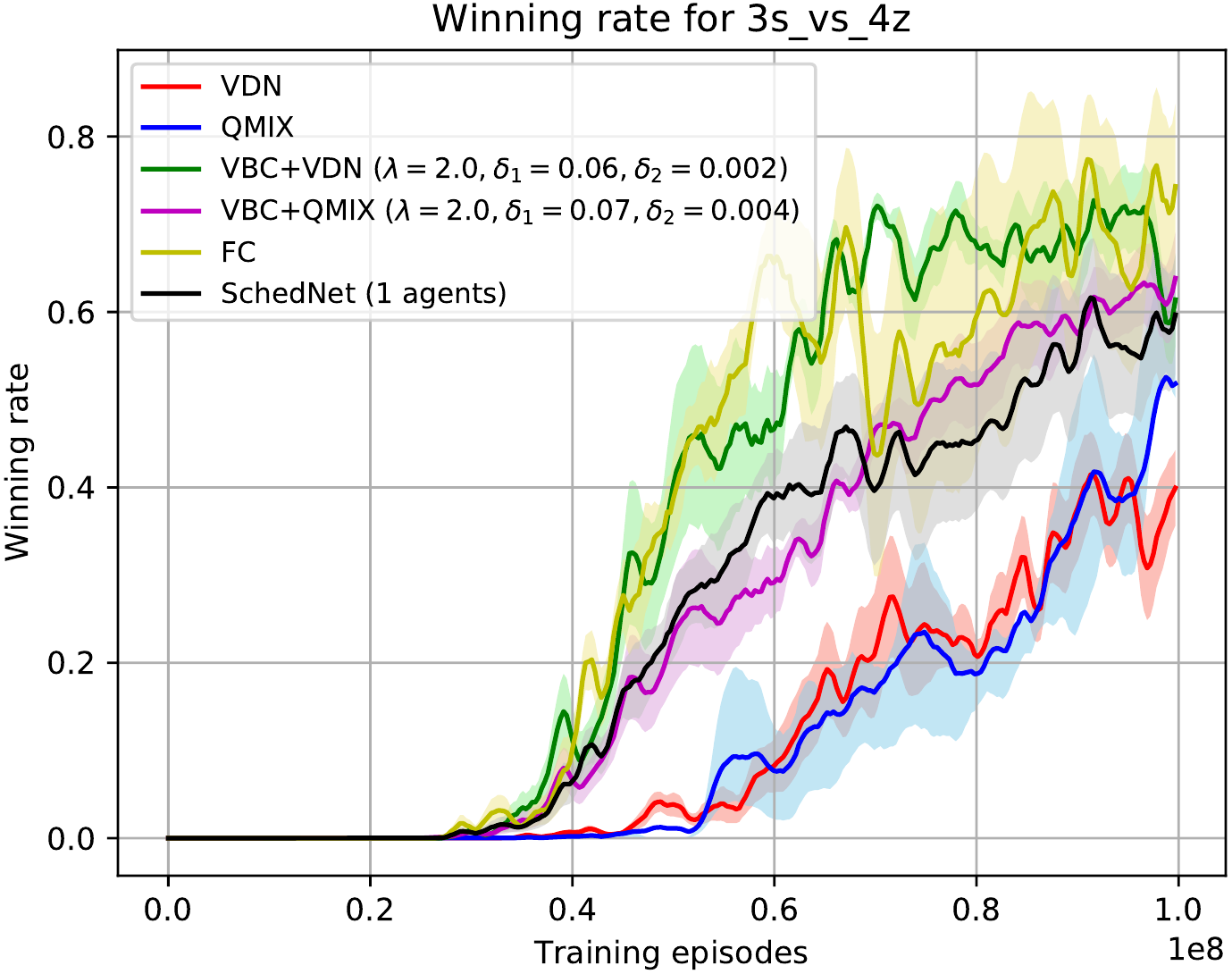}
        \subcaption{3s\_vs\_4z}\label{fig:3s_vs_4z}
    \end{subfigure}%
    \begin{subfigure}{0.33\textwidth}
        \centering
        \includegraphics[width=0.92\linewidth, height=0.7\linewidth]{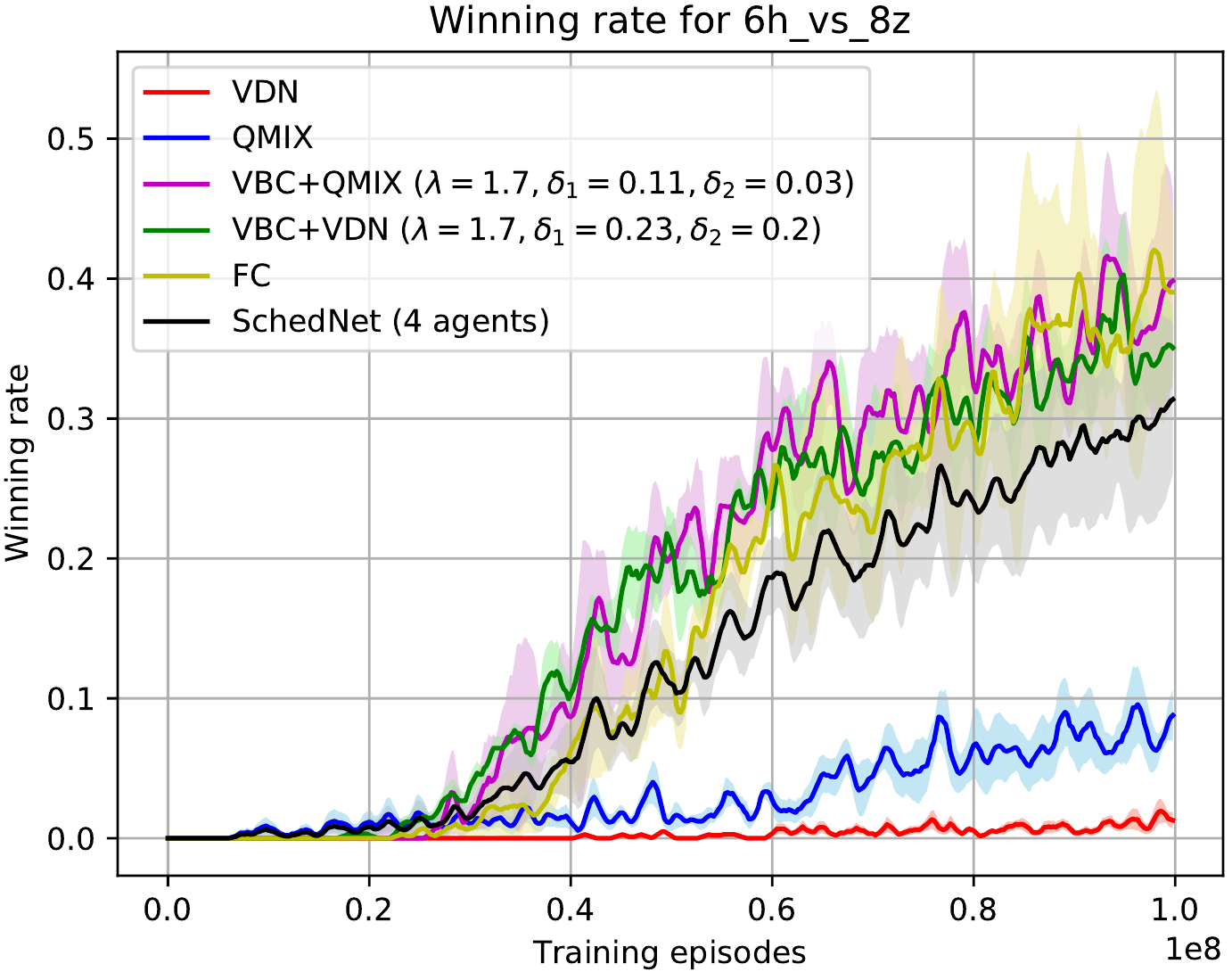}
        \subcaption{6h\_vs\_8z}\label{fig:6h_vs_8z}
    \end{subfigure}
    \begin{subfigure}{0.33\textwidth}
        \centering
        \includegraphics[width=0.92\linewidth, height=0.7\linewidth]{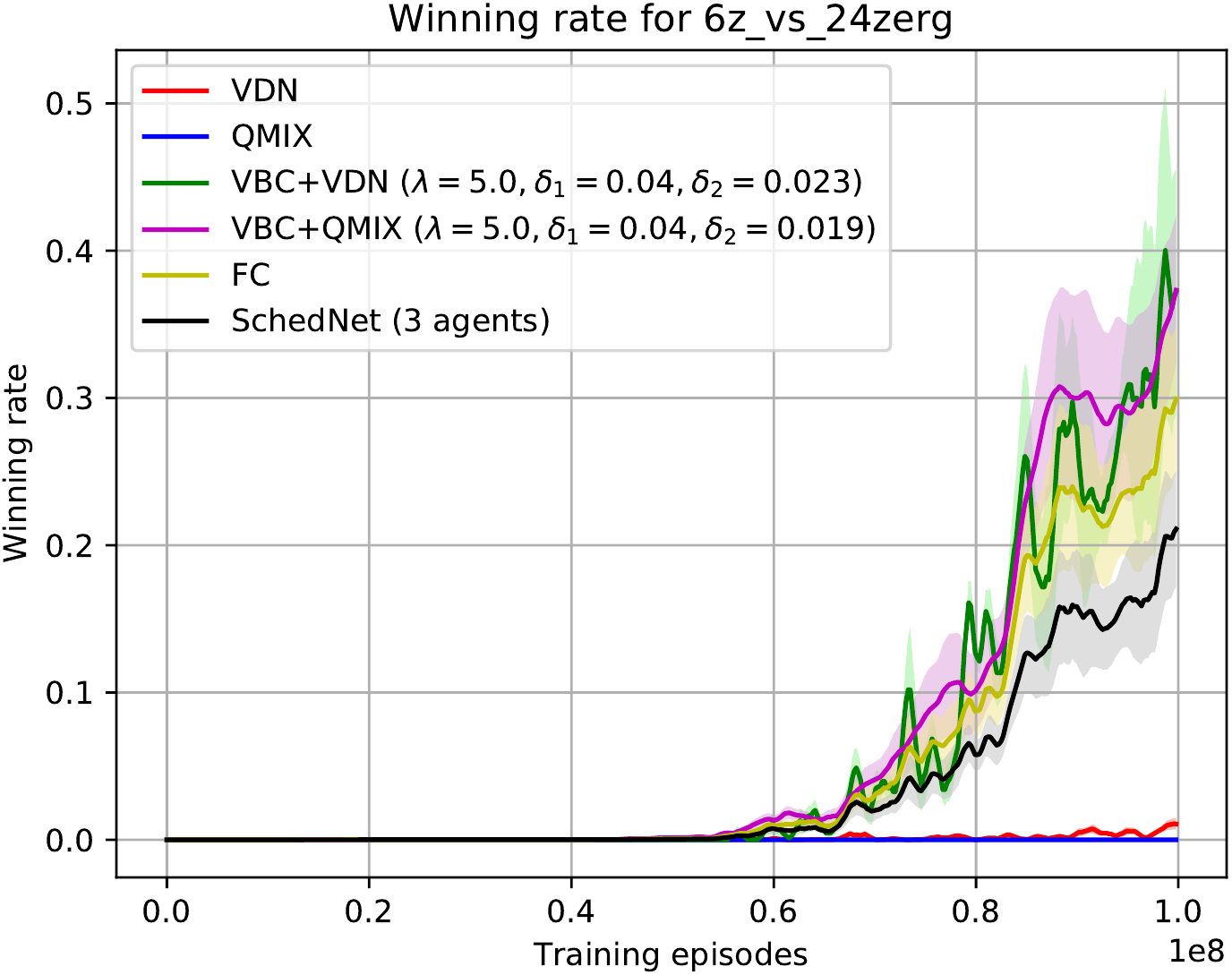}
        \subcaption{6z\_vs\_24zerg}\label{fig:6z_vs_24zerg}
    \end{subfigure}  
    \caption{Winning rates for the six tasks, the shaded regions represent the 95\% confidence intervals.}
    \label{fig:winning_rates}
\end{figure*}
\begin{figure*}[htb]
    \centering
    \begin{subfigure}{.3\textwidth}
        \centering
        \includegraphics[width=0.85\linewidth, height=0.6\linewidth]{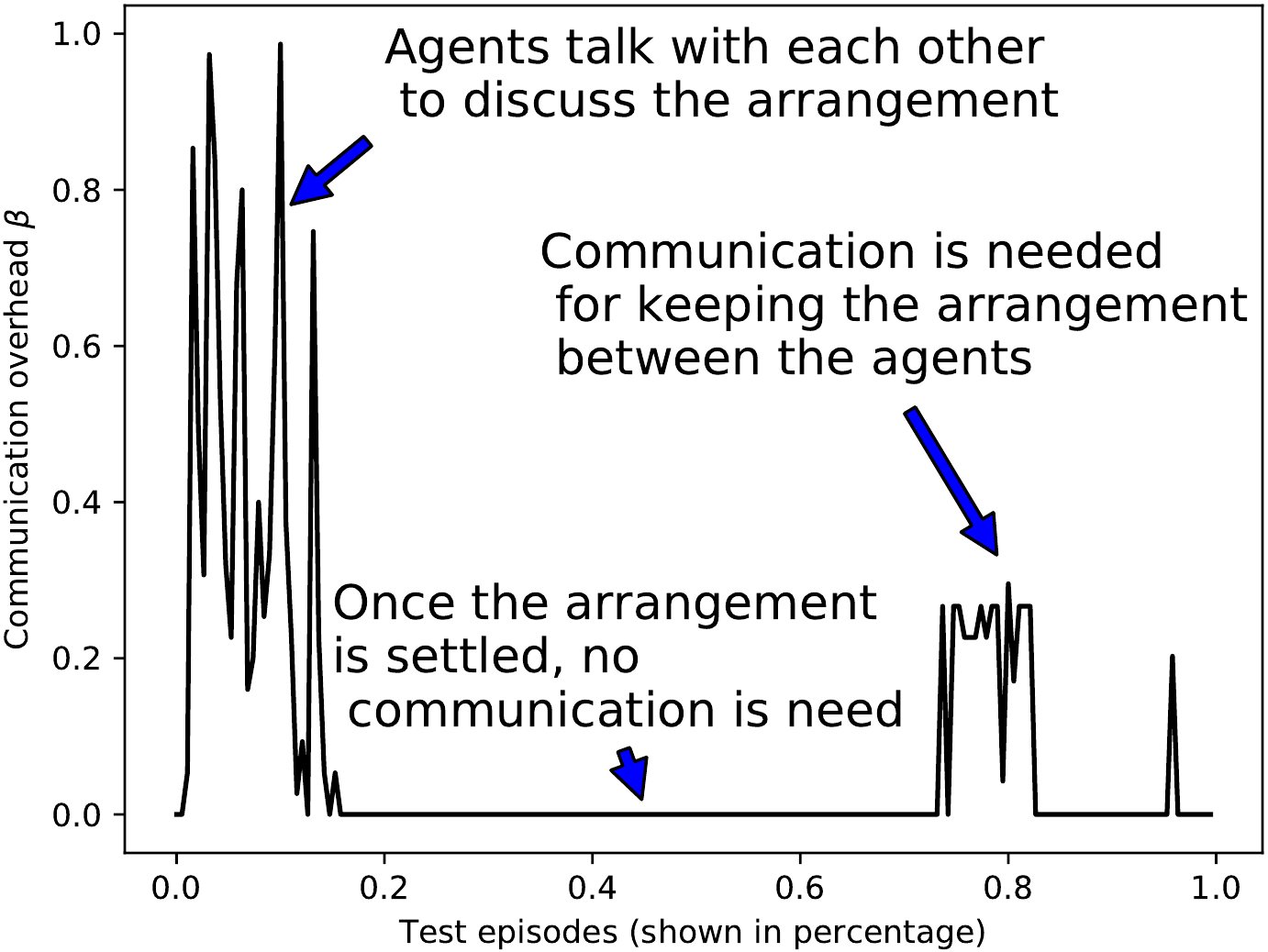}
        \caption{Communication overhead}
    \end{subfigure}%
    \begin{subfigure}{0.3\textwidth}
        \centering
        \includegraphics[width=0.78\linewidth, height=0.6\linewidth]{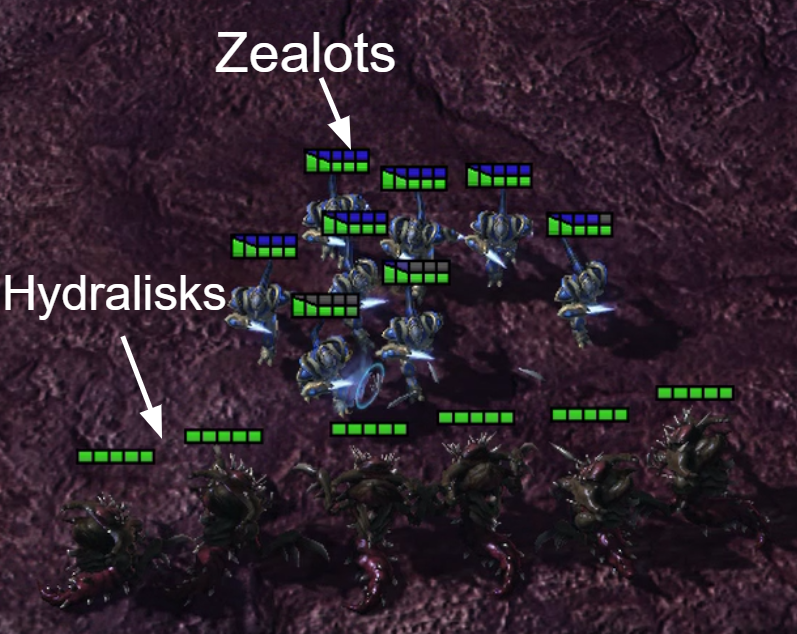}
        \caption{VBC (6h\_vs\_8z)}
    \end{subfigure}
    \begin{subfigure}{0.3\textwidth}
        \centering
        \includegraphics[width=0.85\linewidth, height=0.6\linewidth]{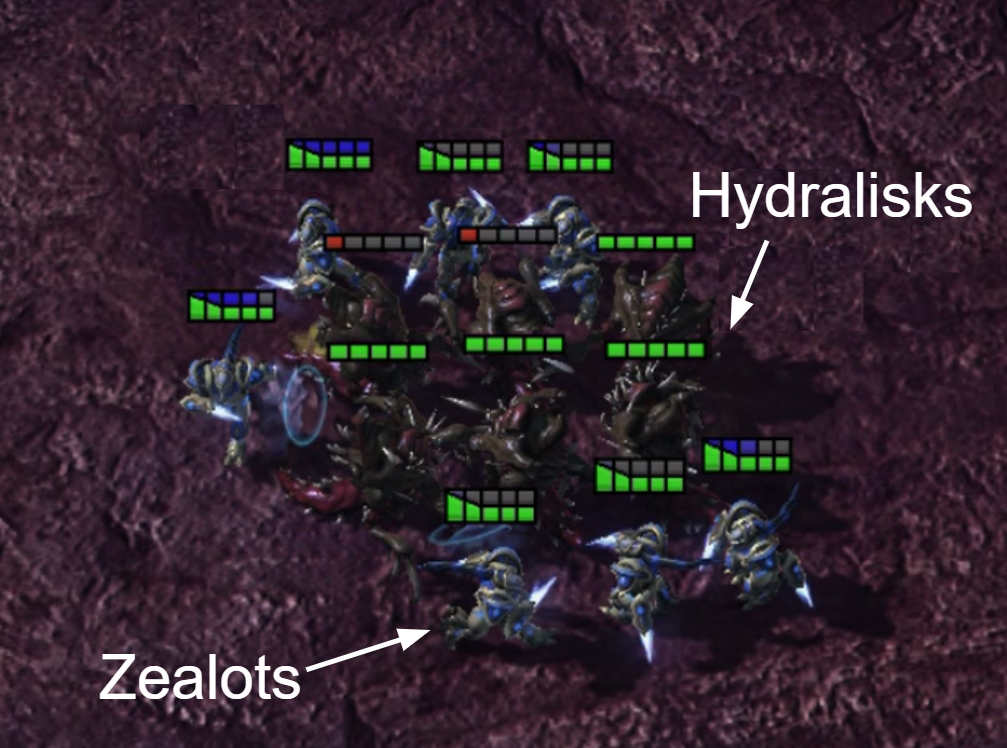}
        \caption{QMIX and VDN (6h\_vs\_8z)}
    \end{subfigure}
    \begin{subfigure}{0.3\textwidth}
        \centering
        \includegraphics[width=0.85\linewidth, height=0.65\linewidth]{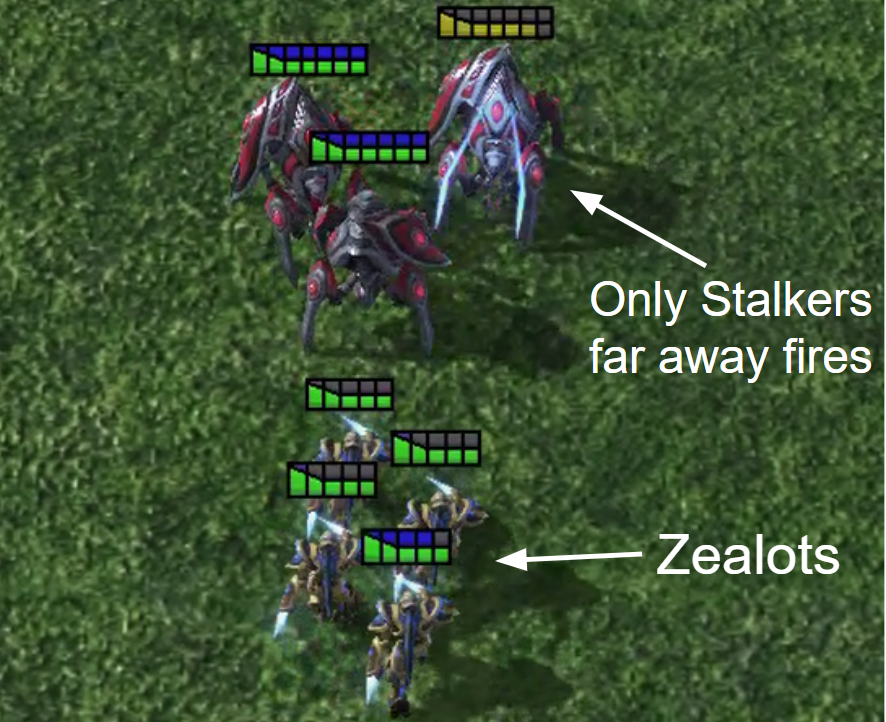}
        \caption{VBC (3s$\_$vs$\_$4z)}
    \end{subfigure}
    \begin{subfigure}{0.3\textwidth}
        \centering
        \includegraphics[width=0.8\linewidth, height=0.65\linewidth]{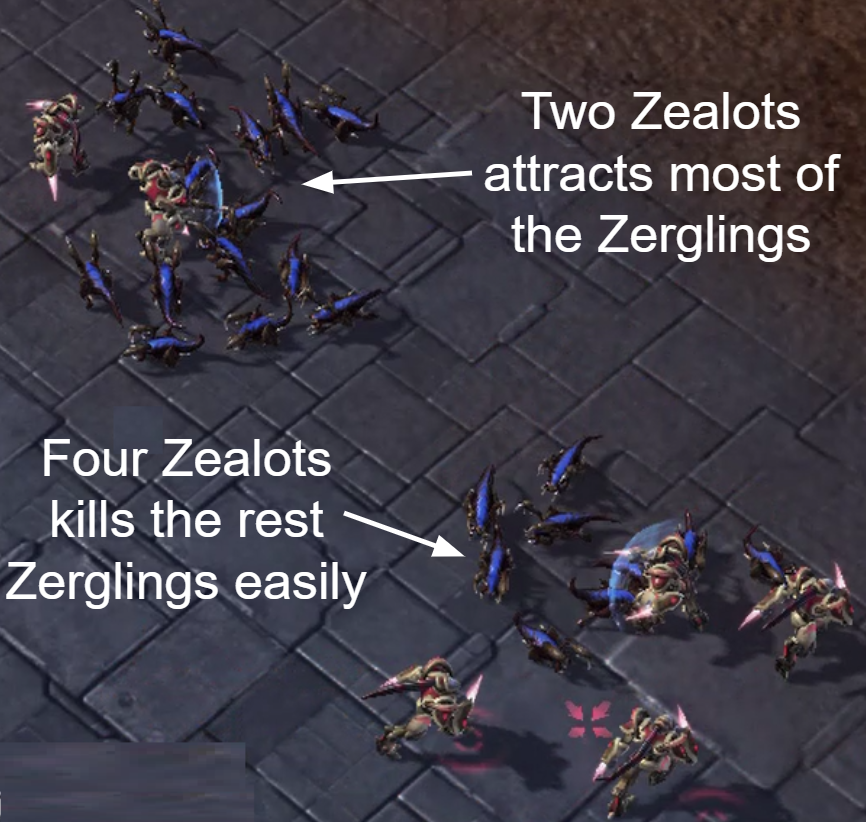}
        \caption{VBC (6z\_vs\_24zergs, stage 1)}
    \end{subfigure}
    \begin{subfigure}{0.3\textwidth}
        \centering
        \includegraphics[width=0.83\linewidth, height=0.65\linewidth]{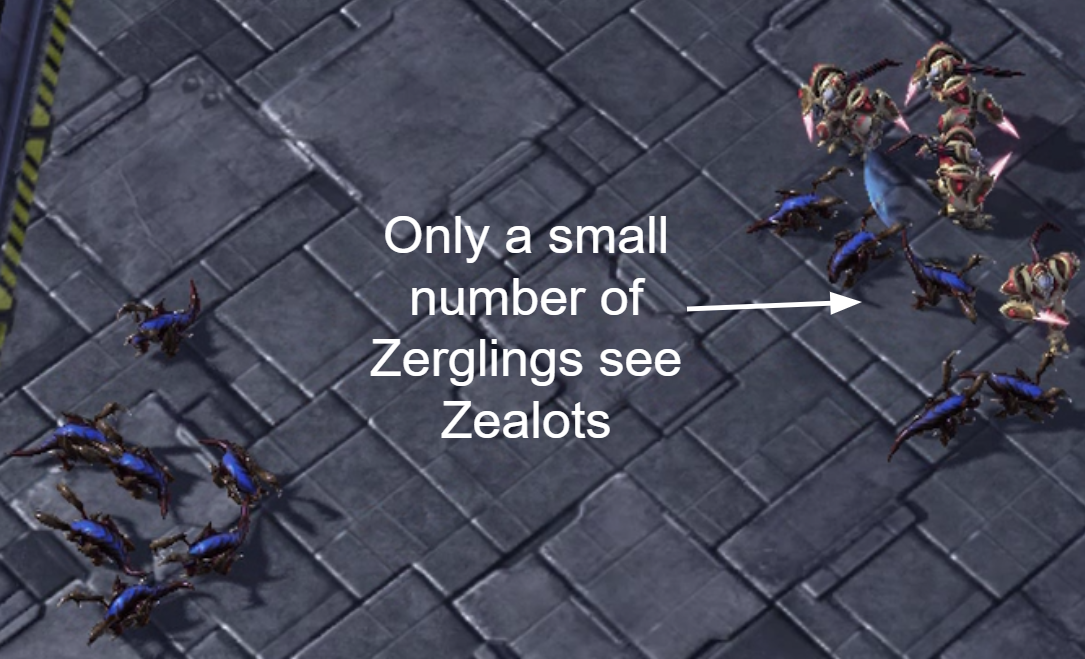}
        \caption{VBC (6z\_vs\_24zergs, stage 2)}
    \end{subfigure}
    \caption{Strategies and communication pattern for different scenarios}
    \label{fig:strategies_and_comm}
\end{figure*}
\subsection{Communication Overhead}
We now evaluate the communication overhead of VBC. To quantify the amount of communication, we run Algorithm~\ref{alg:communication_proto} and count the total number of pairs of agents $g_{t}$ that conduct communication for each timestep $t$, then divided by the total number of pairs of agents in the user group, $R$. For example, for the task 3s$\_$vs$\_$4z, the user controls 3 Stalkers, and therefore the total number of possible pairs of agents $R$ for communication is $3\times 2 = 6$. Within these $6$ pairs of agents, suppose that $2$ pairs involve communication, and therefore $g_{t} = 2$. Let $\beta = \sum_{t} g_{t}/RT$ denote the communication overhead. Table~\ref{table:communication_overhead} shows the $\beta$ of VBC+VDN, VBC+QMIX and SchedNet across all the test episodes at the end of the training phase of each battle. For SchedNet, $\beta$ simply equals the ratio between number of allied agents that are allowed to talk and the total number of allied agents. As shown in Table~\ref{table:communication_overhead}, in contrast to ScheNet, VBC+VDN and VBC+QMIX produce 10$\times$ lower communication overhead for MMM and 2s3z, and $2-6\times$ less traffic for the rest of tasks.

\begin{figure}[bp!]
    \centering
    \begin{subfigure}{0.4\textwidth}
        \centering
        \includegraphics[width=0.9\linewidth, height=0.6\linewidth]{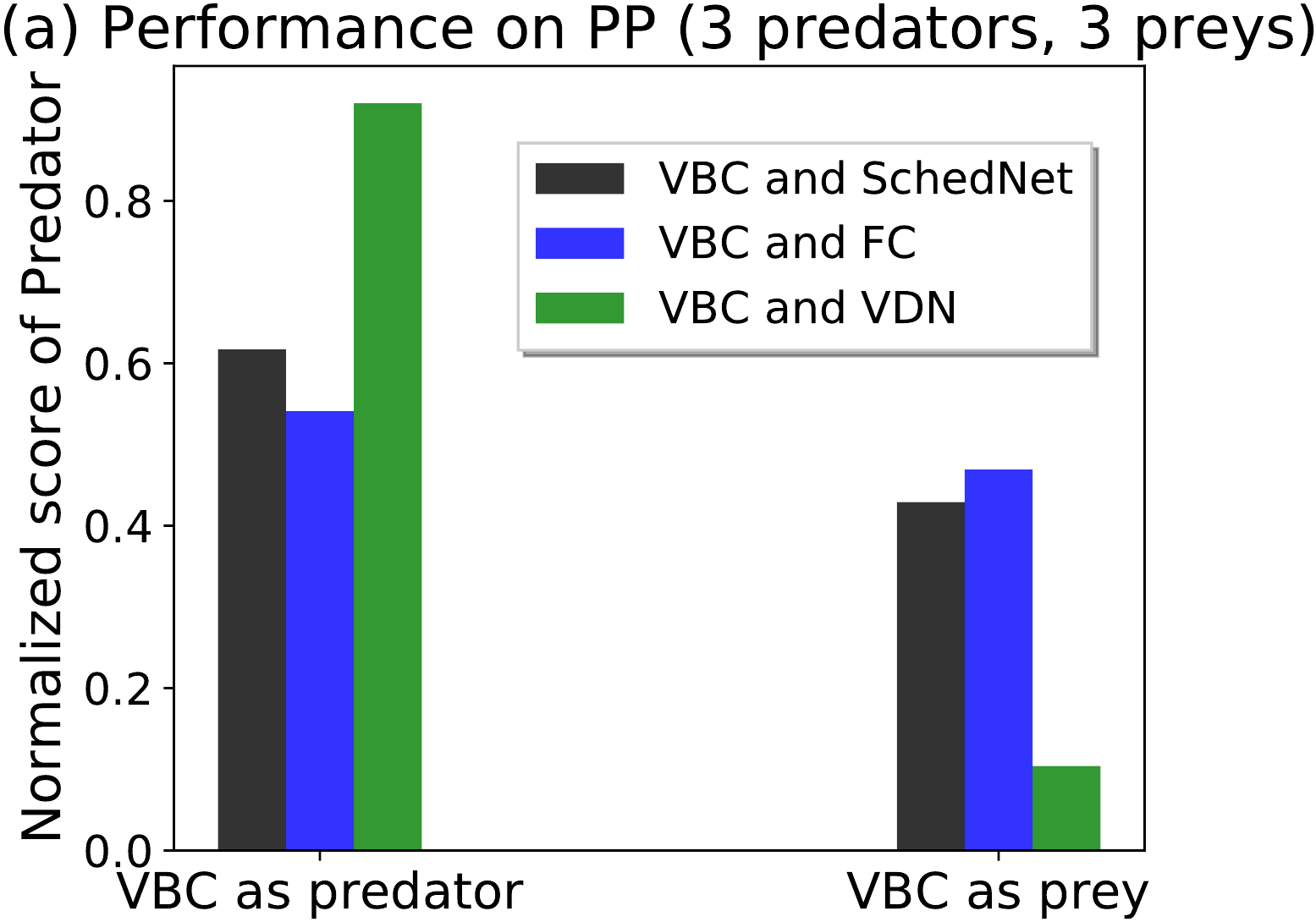}

    \end{subfigure}  
    \begin{subfigure}{0.5\textwidth}
    \captionsetup{skip=0pt,font=scriptsize}
    \subcaption*{(b) Results on Cooperative Navigation (\#agents = 6)}
    \centering
    \begin{tabular}{c|c|c}
    \hline
    \enspace\enspace Methods  &  Avg. dist &  \#collisions \\\hline
    VBC+VDN  & \enspace 2.687 & \enspace\enspace 0.169 \\
    \enspace SchedNet  & \enspace 2.798 & \enspace\enspace 0.176 \\
    \enspace FC        & \enspace 2.990 & \enspace\enspace 0.161 \\
    \enspace\enspace VDN  & \enspace 3.886 & \enspace\enspace 1.872 \\
    \hline
    \end{tabular}
    \end{subfigure}  
    \caption{(a) Results on PP with 3 predators and 3 prey. (b) shows results of CN.}
    \label{fig:maddpg-result2}
\end{figure}

\begin{table}
\centering
\caption{Communication overhead}
\begin{tabular}{|c | c | c | c|}
\hline
$\beta$ & VBC+VDN & VBC+QMIX & SchedNet \\ \hline
\textbf{MMM}  & 5.25\% & 5.36\% & 50\% \\
\textbf{2s3z}  & 4.33\% & 4.68\% & 60\% \\
\textbf{3s5z}  & 27.70\% & 28.13\% & 62.5\% \\
\textbf{3s\_vs\_4z}  & 5.07\% & 5.19\% & 33.3\% \\
\textbf{6h\_vs\_8z}  & 35.93\% & 36.16\% & 66.7\% \\
\textbf{6z\_vs\_24zerg}  & 12.13\% & 13.35\% & 50\% \\\hline
\end{tabular}
\label{table:communication_overhead}
\end{table}

\subsection{Learned Strategy}
\label{sec:strategy}
In this section, we examine the behaviors of the agents in order
to better understand the strategies adopted by the different algorithms. We have made a video demo available at~[2] for better illustration.


For unsymmetrical battles, the number of allied units is less than the enemy units,
and therefore the agents are prone to be besieged and killed by the enemies. This is exactly what happened for the QMIX and VDN agents on 6h$\_$vs$\_$8z, as shown in (Figure~\ref{fig:strategies_and_comm}(c)). 
Figure~\ref{fig:strategies_and_comm}(b) shows the strategy of VBC, all the Hydralisks are placed in a row at the bottom margin of the map. Due to the limited size of the map, the Zealots can not go beyond the margin to surround the Hydralisks. The Hydralisks then focus their firings to kill each Zealot. Figure~\ref{fig:strategies_and_comm}(a) shows the change on $\beta$ for a sample test episode. We observe that most of the communication appears in the beginning of the episode. This is due to the fact that Hydralisks need to talk in order to place in a row. After the arrangement is formed, no communication is needed until the arrangement is broken due to the deaths of some Hydralisks, as indicated by the short spikes near the end of the episode. 
Finally, SchedNet and FC utilize a similar strategy as VBC. Nonetheless, due to the restriction on communication pattern, the row formed by the allied agents are usually not well formed, and can be easily broken by the enemies.

For 3s$\_$vs$\_$4z scenario, the Stalkers have a larger attack range than Zealots. 
All the algorithms adopt a kiting strategy where the Stalkers form a group and attack the Zealots while kiting them. For VBC and FC, at each timestep only the agents that are far from the enemies attack, and rest of the agents (usually the healthier ones) are used as a shield to protect the firing agents (Figure~\ref{fig:strategies_and_comm}(d)). Communication only occurs when the group are broken and need to realign. In contrast, VDN and QMIX do not have this attacking pattern, and all the Stalkers always fire simultaneously, therefore the Stalkers closest to the Zealots are get killed soon. SchedNet and FC also adopt a similar policy as VBC, but the attacking pattern of the Stalkers is less regular, i.e., the Stalkers close to the Zealots also fire occasionally. 

6z$\_$vs$\_$24zerg is the toughest scenario in our experiment. For QMIX and VDN, the 6 Zealots are surrounded and killed by 24 Zerglings shortly after episode starts. In contrast, VBC first separates the agents into two groups with two Zealots and four Zealots respectively (Figure~\ref{fig:strategies_and_comm}(e)). The two Zealots attract most of Zerglings to a place far away from the rest four Zealots, and are killed shortly. Due to the limit sight range of the Zerglings, they can not find the rest four Zealots. On the other side, the four Zealots kill the small part of Zerglings easily and search for the rest Zerglings. The four Zealots take advantage of the short sight of the Zerglings. 
Each time the four Zealots adjust their positions in a way such that they can only be seen by a small amount of the Zerglings, the baited Zerglings are then be killed easily (Figure~\ref{fig:strategies_and_comm}(f)). For VBC, the communication only occurs in the beginning of the episode when the Zealots are separated into two groups, and near the end of the episode when four Zealots adjust their positions. Both FC and SchedNet learn the strategy of splitting the Zealots into two groups, but they fail to fine-tune their positions to kill the remaining Zerglings. 

For symmetrical battles, the tasks are less challenging, and we see less disparities on performances of the algorithms. For 2s3z and 3s5z, the VDN agents attack the enemies blindly without any cooperation. The QMIX agents learn to focus firing and protect the Stalkers. The agents of VBC, FC and SchedNet adopt a more aggressive policy, where the allied Zealots try to surround and kill the enemy Zealots first, and then attack the enemy Stalkers by collaborating with the allied Stalkers. This is extremely effective because Zealots counter Stalkers, so it is important to kill the enemy Zealots before they damage allied Stalkers. For VBC, the communication occurs mostly when the allied Zealots try to surround the enemy Zealots. For MMM, almost all the methods learn the optimal policy, namely killing the Medivac first, then attack the rest of the enemy units cooperatively.
\subsection{Evaluation on Cooperative Navigation and Predator-prey}

Furthermore, we evaluate the algorithms for two more scenarios: (1) Cooperative Navigation (CN) which is a cooperative scenario, and (2) Predator-prey (PP) which is a competitive scenario. The game settings are the same as what are used in~[10] and~[8], respectively. We train each method until convergence and test the result models for 2000 episodes. For PP, we make the agents of VBC to compete against the agents of other methods, and report the normalized score of Predator (Figure~\ref{fig:maddpg-result2}(a)). For CN we report the average distance between agents and their destinations, and average number of collisions (Figure~\ref{fig:maddpg-result2}(b)). We notice that methods which allows communication (\ie, SchedNet, FC, VBC) outperform the others for both tasks, and VBC achieves the best performance. Moreover, VBC incurs a communication overhead of $10.07\%$ and $8.80\%$ for PP and CN respectively. In CN, most of the communication of VBC occurs when the agents are close to each other to prevent collisions. In PP, the communication of VBC occurs mainly to rearrange agent positions for better coordination.

\section{Conclusion}
In this work, we propose VBC, a simple and effective approach to achieve efficient communication among the agents in MARL. By constraining the variance of the exchanged messages during the training phase, VBC improves communication efficiency while enabling better cooperation among the agents.
The test results of StarCraft Multi-Agent Challenge indicate that VBC outperforms the other state-of-the-art methods significantly in terms of both winning rate and communication overhead.


\newpage
\section{Appendix}

\subsection{Proof for theorem 1}
\begin{theorem}
    Assume $0\leq \eta_{k} \leq 1$, $\sum_{k}\eta_{k}=\infty$, $\sum_{k}\eta_{k}^{2}<\infty$. Also assume the number of possible actions and states are finite. By performing equation 2 iteratively, we have $||Q_{tot}^{k}(\textbf{o}_{t},\textbf{a}_{t})-Q_{tot}^{*}(\textbf{o}_{t},\textbf{a}_{t})||\leq \lambda NG$ $\forall \textbf{o}_{t},\textbf{a}_{t}$, as $k \rightarrow \infty$, where $G$ satisfies $||\frac{\partial Var(f_{enc}(c^{t}_{i}))}{\partial Q_{tot}^{k}(\textbf{o}_{t},\textbf{a}_{t})}||\leq G, \forall i,k,t,\textbf{o}_{t},\textbf{a}_{t}$.
\end{theorem}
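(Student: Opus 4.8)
The plan is to treat equation~(\ref{eqn:update_rule}) as a \emph{perturbed} stochastic approximation for Q-learning, in which the familiar Q-learning recursion is augmented by a bounded drift coming from the variance penalty. First I would form the error process $\delta^{k}(\textbf{o}_{t},\textbf{a}_{t}) = Q_{tot}^{k}(\textbf{o}_{t},\textbf{a}_{t}) - Q_{tot}^{*}(\textbf{o}_{t},\textbf{a}_{t})$ by subtracting the fixed point $Q_{tot}^{*}$ from both sides of the update and regrouping. Writing $F_{k}(\textbf{o}_{t},\textbf{a}_{t}) = r_{t} + \gamma \max_{\textbf{a}} Q_{tot}^{k}(\textbf{o}_{t+1},\textbf{a}) - Q_{tot}^{*}(\textbf{o}_{t},\textbf{a}_{t})$ for the ordinary Bellman-error term and $U_{k}(\textbf{o}_{t},\textbf{a}_{t}) = \sum_{i=1}^{N} \partial Var(f_{enc}(c_{i}^{t}))/\partial Q_{tot}^{k}(\textbf{o}_{t},\textbf{a}_{t})$ for the penalty gradient, the recursion takes the linear form $\delta^{k+1} = (1-\eta_{k})\delta^{k} + \eta_{k}\bigl[F_{k} - \lambda U_{k}\bigr]$.

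The key idea is to exploit the linearity of this recursion in $\delta^{k}$ and split it into two iterations that share the same contraction coefficient $(1-\eta_{k})$: a process $\delta_{1}^{k}$ driven by $F_{k}$ alone, and a process $\delta_{2}^{k}$ driven by $-\lambda U_{k}$ alone, with $\delta^{k} = \delta_{1}^{k} + \delta_{2}^{k}$. For the first, $\delta_{1}^{k+1} = (1-\eta_{k})\delta_{1}^{k} + \eta_{k} F_{k}$ is exactly the error recursion of standard tabular Q-learning; since $F_{k}$ separates into the $\gamma$-contraction of the joint Bellman operator plus zero-mean sampling noise, I would invoke the established stochastic-approximation convergence result for Q-learning to conclude $\delta_{1}^{k} \to 0$ with probability one, using the step-size conditions $\sum_{k}\eta_{k}=\infty$, $\sum_{k}\eta_{k}^{2}<\infty$ and the finiteness of the state-action space.

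For the second process I would argue by a direct contraction-toward-a-ball estimate. Taking norms in $\delta_{2}^{k+1} = (1-\eta_{k})\delta_{2}^{k} - \eta_{k}\lambda U_{k}$ and using $\|U_{k}\| \le NG$ gives $\|\delta_{2}^{k+1}\| \le (1-\eta_{k})\|\delta_{2}^{k}\| + \eta_{k}\lambda N G$, which rearranges to $\|\delta_{2}^{k+1}\| - \lambda N G \le (1-\eta_{k})\bigl(\|\delta_{2}^{k}\| - \lambda N G\bigr)$. Because $\sum_{k}\eta_{k}=\infty$ forces $\prod_{k}(1-\eta_{k}) = 0$, the corrected quantity $\|\delta_{2}^{k}\| - \lambda N G$ is driven to a nonpositive limit, and since $\|\delta_{2}^{k}\| \ge 0$ it stabilizes in $[0,\lambda NG]$, so $\limsup_{k}\|\delta_{2}^{k}\| \le \lambda N G$. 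Combining the two pieces by the triangle inequality, $\|\delta^{k}\| \le \|\delta_{1}^{k}\| + \|\delta_{2}^{k}\|$, and letting $k\to\infty$ yields the claimed asymptotic bound $\|Q_{tot}^{k}-Q_{tot}^{*}\| \le \lambda N G$.

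I expect the main obstacle to be the first step's appeal to the external Q-learning convergence theorem: one must verify that $F_{k}$, after the decomposition, genuinely satisfies that theorem's hypotheses, in particular that the contraction of the joint Bellman operator and the conditional mean/variance conditions on the noise hold in this mixing-network tabular setting rather than only for scalar Q-learning. A secondary subtlety is justifying the uniform gradient bound $G$, which relies on the tabular assumption (finite states and actions) together with boundedness of the derivative of the variance functional; this is what underpins the $\|U_{k}\| \le NG$ estimate used throughout.
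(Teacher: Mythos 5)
Your proposal matches the paper's own proof essentially step for step: the same error process $\delta^{k}$, the same splitting into $\delta_{1}^{k}$ (handled by the standard tabular Q-learning convergence theorem) and $\delta_{2}^{k}$ (handled by the contraction-toward-a-ball estimate $\|\delta_{2}^{k+1}\|-\lambda NG \leq (1-\eta_{k})(\|\delta_{2}^{k}\|-\lambda NG)$), combined at the end by the triangle inequality. Your added remark that $\sum_{k}\eta_{k}=\infty$ forces $\prod_{k}(1-\eta_{k})=0$ is a welcome explicit justification of a step the paper leaves implicit, but the argument is the same.
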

\begin{proof}
The proof is based on~[21] and ~[22]. 
If we subtract $Q_{tot}^{*}(\textbf{o}_{t},\textbf{a}_{t})$ from equation 2 in the paper, and rearrange the equation, we get:
\begin{equation}
        \delta^{k+1}(\textbf{o}_{t},\textbf{a}_{t}) = (1-\eta_{k})\delta^{k}(\textbf{o}_{t},\textbf{a}_{t}) + \eta_{k}\bigg[r_{t} + \gamma max_{\textbf{a}}Q_{tot}^{k}(\textbf{o}_{t+1},\textbf{a})
         - Q_{tot}^{*}(\textbf{o}_{t},\textbf{a}_{t}) - \lambda\sum_{i=1}^{N}\frac{\partial Var(f_{enc}(c^{t}_{i}))}{\partial Q_{tot}^{k}(\textbf{o}_{t},\textbf{a}_{t})}\bigg] 
\end{equation}
where $\delta^{k}(\textbf{o}_{t},\textbf{a}_{t}) = Q_{tot}^{k}(\textbf{o}_{t},\textbf{a}_{t})-Q_{tot}^{*}(\textbf{o}_{t},\textbf{a}_{t})$. Let $r_{t} + \gamma max_{\textbf{a}}Q_{tot}^{k}(\textbf{o}_{t+1},\textbf{a})- Q_{tot}^{*}(\textbf{o}_{t},\textbf{a}_{t}) = F_{k}(\textbf{o}_{t},\textbf{a}_{t})$, and $\sum_{i=1}^{N}\frac{\partial Var(f_{enc}(c^{t}_{i}))}{\partial Q_{tot}^{k}(\textbf{o}_{t},\textbf{a}_{t})} = U_{k}(\textbf{o}_{t},\textbf{a}_{t})$ we have:
\begin{equation}
        \delta^{k+1}(\textbf{o}_{t},\textbf{a}_{t}) = (1-\eta_{k})\delta^{t}(\textbf{o}_{t},\textbf{a}_{t}) + \eta_{k}\bigg[F_{k}(\textbf{o}_{t},\textbf{a}_{t}) - \lambda U_{k}(\textbf{o}_{t},\textbf{a}_{t})\bigg] 
\end{equation}
Decompose $\delta^{k+1}(\textbf{o}_{t},\textbf{a}_{t})$ into two random processes, $\delta^{k+1}_{1}(\textbf{o}_{t},\textbf{a}_{t})$ and $\delta^{k+1}_{2}(\textbf{o}_{t},\textbf{a}_{t})$, where $\delta^{k+1}(\textbf{o}_{t},\textbf{a}_{t}) = \delta^{k+1}_{1}(\textbf{o}_{t},\textbf{a}_{t}) + \delta^{k+1}_{2}(\textbf{o}_{t},\textbf{a}_{t})$, we have the following two random iterative processes:
\begin{align}
    \delta^{k+1}_{1}(\textbf{o}_{t},\textbf{a}_{t}) &= (1-\eta_{k})\delta^{k}_{1}(\textbf{o}_{t},\textbf{a}_{t}) + \eta_{k}F_{k}(\textbf{o}_{t},\textbf{a}_{t}) \\
    \delta^{k+1}_{2}(\textbf{o}_{t},\textbf{a}_{t}) &= (1-\eta_{k})\delta^{k}_{2}(\textbf{o}_{t},\textbf{a}_{t}) - \eta_{k}\lambda U_{k}(\textbf{o}_{t},\textbf{a}_{t}) 
\end{align}
From Theorem 1 of~[22], we know that $\delta^{k+1}_{1}(\textbf{o}_{t},\textbf{a}_{t})$ converges to zero w.p. 1. From equation $6$, we notice that:
\begin{align}
     ||\delta^{k+1}_{2}(\textbf{o}_{t},\textbf{a}_{t})|| &\leq ||(1-\eta_{k})\delta^{k}_{2}(\textbf{o}_{t},\textbf{a}_{t})|| + \eta_{k}\lambda ||U_{t}(\textbf{o}_{t},\textbf{a}_{t})|| \\
     &\leq(1-\eta_{k})||\delta^{k}_{2}(\textbf{o}_{t},\textbf{a}_{t})|| + \eta_{k}\lambda NG
\end{align}
Therefore we have $||\delta^{k+1}_{2}(\textbf{o}_{t},\textbf{a}_{t})||-\lambda NG \leq(1-\eta_{k})(||\delta^{k}_{2}(\textbf{o}_{t},\textbf{a}_{t})|| -\lambda NG)$.
This is linear in $\delta^{k+1}_{2}(\textbf{o}_{t},\textbf{a}_{t})$ and $||\delta^{k+1}_{2}(\textbf{o}_{t},\textbf{a}_{t})||-\lambda NG$ will converge to a nonpositive number as k approaches infinity. However, because $||\delta^{k+1}_{2}(\textbf{o}_{t},\textbf{a}_{t})||$ is always greater or equal to zero, hence $||\delta^{k+1}_{2}(\textbf{o}_{t},\textbf{a}_{t})||$ must converge to a number between 0 and $\lambda NG$.
Therefore we get:
\begin{align}
     & ||Q_{tot}^{k}(\textbf{o}_{t},\textbf{a}_{t})-Q_{tot}^{*}(\textbf{o}_{t},\textbf{a}_{t})|| =||\delta^{k}(\textbf{o}_{t},\textbf{a}_{t})|| \\
     &=||\delta^{k}_{1}(\textbf{o}_{t},\textbf{a}_{t}) + \delta^{k}_{2}(\textbf{o}_{t},\textbf{a}_{t})|| \\
     &\leq ||\delta^{k}_{1}(\textbf{o}_{t},\textbf{a}_{t})|| + ||\delta^{k}_{2}(\textbf{o}_{t},\textbf{a}_{t})|| \\
     &\leq \lambda NG 
\end{align}
as k approaches infinity.
\end{proof}

\section{Experiment settings and hyperparameters}
In this section, we describe in detail the experiment settings.
\subsection{StarCraft micromanagement challenges}
StarCraft II~[1] is a real-time strategy (RTS) game that has recently been utilized as a challenging benchmark by the reinforcement learning community~[14,5,13]. In this work, we concentrate on the~\emph{decentralized micromanagement problem} in StarCraft II. Specifically, the user controls an army that consists of several army units (agents). We consider the battle scenario where two armies, one controlled by the user, and the other controlled by the build-in StarCraft II AI, are placed on the same map and try to defeat each other. The agent type can be different between the two armies, and the agent type can also be different within the same army. The goal of the user is to control the allied units wisely to kill all the enemy units, while minimizing the damage on the health of each individual agent. The difficulty of the computer AI is set to \emph{Medium}. We consider six different battle settings, which is shown in Table~\ref{table:game_setting}. Among these six settings, three are \emph{symmetrical battles}, where the user group and the enemy group are identical in terms of type and quantity of agents. The other three are \emph{unsymmetrical battles}, where the agent type of user groups and enemy group are different, and user group contains less number of allied units than the enemy group. The unsymmetrical battles is consider to be harder than the symmetrical battles because of the difference in the army size.

During execution, each agent are allowed to perform the following actions: move[direction], attack[enemy$\_$id], stop and no-op. There are four directions for the 'move' operation: east, west, south, or north. Each agent can only attack the enemy within its \emph{shooting range}. The Medivacs can only heal the partner by performing the heal[partner$\_$id] instead of attacking the enemies. The number of possible actions for an agent ranges from 11 (2s3z) to 30 (6z$\_$vs$\_$24zerg). Each agent has a limited \emph{sight range}, and can only receive information from the partners or enemies within the sight range. Furthermore, the shooting range is smaller than the sight range so the agent can not attack the opponents without observing them, and the attack operation is not available when the opponents are outside the shooting range. Finally, agents can only observe the live agents within the sight range, and can not discern the agents that are dead or outside the range. At each timestep, the joint reward received by the allied units equals to the total damage on the health levels of the enemy units. Additionally, the agents are rewarded 100 extra points after killing each enemy unit, and 200 extra points for killing all the enemies. The user group wins the battle only when the allied units kill all the enemies within the time limit. The user group loses the battle if either all the allied units are killed, or the time limit reaches. The time limit for different battles are: 120 timesteps for 2s3z and MMM, 150 for 3s5z and 3s$\_$vs$\_$4z, and 200 for 6h$\_$vs$\_$8z and 6z$\_$vs$\_$24zerg. 

The input observation of each agent consisting of a vector which involves the following information of each allied unit and enemy unit in its sight range: relative x,y coordinates, relative distance and agent type. For the mixing network of QMIX, the global state vector $\textbf{s}_{t}$ contains the following elements: 
\begin{enumerate}
    \item Shield levels, health levels and cooldown levels of all the units at $t$. 
    \item The actions taken by all the units at $t-1$.
    \item The x,y coordinates of all the units relative to the center of the map at $t$.
\end{enumerate}
For all the six battles, each allied or enemy agent has a sight range of 9 and shooting range of 6 for all types of agents. For additional information, please refer to~[15].
\begin{table*}
\small
\centering
\caption{Agent types of the six battles}
\begin{tabular}{c|c|c|c}

\textbf{Symm.} & \textbf{2s3z} & \textbf{MMM} & \textbf{3s5z}  \\ \hline
User  & 2 Stalkers \& 3 Zealots & 1 Medivac, 2 Marauders \& 7 Marines & 3 Stalkers \& 5 Zealots \\\hline
Enemy  & 2 Stalkers \& 3 Zealots & 1 Medivac, 2 Marauders \& 7 Marines & 3 Stalkers \& 5 Zealots \\\hline
\textbf{Unsymm.} & \textbf{3s\_vs\_4z} & \textbf{6h\_vs\_8z} & \textbf{6z\_vs\_24zerg} \\ \hline
User  & 3 Stalkers  &  6 Hydralisks & 6 Zealots \\\hline
Enemy  & 4 Zealots &   8 Zealots &  24 Zerglings     
\end{tabular}
\label{table:game_setting}
\end{table*}
\subsection{Hyperparameter}
For network of agent $i$, at timestep $t$, raw observation $o_{i}^{t}$ is first passed through a single-layer MLP, which outputs a intermediate result with size of 64. The GRU then takes this intermediate result, as well as the hidden states $h^{t-1}_{i}$ from the previous timestep and generates $h^{t}_{i}$ and $c_{i}^{t}$. Both $h^{t}_{i}$ and $c_{i}^{t}$ has a size of 64.  The $c_{i}^{t}$ is then passed through a FC layer, which generates the local action value function $Q_{i}(o_{i}^{t},h_{i}^{t},.)$. The message encoder contains two FC layers with 196 and 64 units respectively. The combiner performs elementwise summation on the outputs of local action generator and the message encoders.

During the training, we set the $\gamma = 0.99$ and decrease $\epsilon$ linearly from 1.0 to 0.05 over the first $200000$ timesteps and keep it to $0.05$ for the rest of the learning process. The replay buffers stores the most recent 5000 episode. We perform a test run for every 200 training episodes to update the replay buffer. The training batch size is set to 32 and the test batch size is set to 8. We adopt a RMSprop optimizer with a learning rate $\eta=5\times 10^{-4}$ and $\alpha = 0.99$.

\section*{Reference}
{[1]}\enspace Starcraft official game site: \url{https://starcraft2.com/}. 

{[2]}\enspace VBC Video demo: \url{https://bit.ly/2VFkvCZ}. 

{[3]}\enspace J.~Foerster, I.~A. Assael, N.~de~Freitas, and S.~Whiteson.
\newblock Learning to communicate with deep multi-agent reinforcement learning.
\newblock In {\em Advances in Neural Information Processing Systems}, pages
  2137--2145, 2016.
  
{[4]}\enspace J.~N. Foerster, C.~A.~S. de~Witt, G.~Farquhar, P.~H. Torr, W.~Boehmer, and
  S.~Whiteson.
\newblock Multi-agent common knowledge reinforcement learning.
\newblock {\em arXiv preprint arXiv:1810.11702}, 2018.

{[5]}\enspace J.~N. Foerster, G.~Farquhar, T.~Afouras, N.~Nardelli, and S.~Whiteson.
\newblock Counterfactual multi-agent policy gradients.
\newblock In {\em Thirty-Second AAAI Conference on Artificial Intelligence},
  2018.
  
{[6]}\enspace M.~Hausknecht and P.~Stone.
\newblock Deep recurrent q-learning for partially observable mdps.
\newblock In {\em 2015 AAAI Fall Symposium Series}, 2015.

{[7]}\enspace J.~Jiang and Z.~Lu.
\newblock Learning attentional communication for multi-agent cooperation.
\newblock In {\em Advances in Neural Information Processing Systems}, pages
  7254--7264, 2018.
  
{[8]}\enspace D.~Kim, S.~Moon, D.~Hostallero, W.~J. Kang, T.~Lee, K.~Son, and Y.~Yi.
\newblock Learning to schedule communication in multi-agent reinforcement
  learning.
\newblock {\em arXiv preprint arXiv:1902.01554}, 2019.

{[9]}\enspace J.~Kober, J.~A. Bagnell, and J.~Peters.
\newblock "reinforcement learning in robotics: A survey.".
\newblock {\em The International Journal of Robotics Research}, 2013.

{[10]}\enspace R.~Lowe, Y.~Wu, A.~Tamar, J.~Harb, O.~P. Abbeel, and I.~Mordatch.
\newblock Multi-agent actor-critic for mixed cooperative-competitive
  environments.
\newblock In {\em Advances in Neural Information Processing Systems}, pages
  6379--6390, 2017.
  
{[11]}\enspace L.~Marc, V.~Zambaldi, A.~Gruslys, A.~Lazaridou, K.~Tuyls, J.~Pérolat,
  D.~Silver, and T.~Graepel.
\newblock "a unified game-theoretic approach to multiagent reinforcement
  learning.".
\newblock In {\em Advances in Neural Information Processing Systems}, 2017.

{[12]}\enspace V.~Mnih, K.~Kavukcuoglu, D.~Silver, A.~Graves, I.~Antonoglou, D.~Wierstra, and
  M.~Riedmiller.
\newblock "playing atari with deep reinforcement learning.".
\newblock {\em arXiv preprint arXiv:1312.5602}, 2013.

{[13]}\enspace P.~Peng, Y.~Wen, Y.~Yang, Q.~Yuan, Z.~Tang, H.~Long, and J.~Wang.
\newblock Multiagent bidirectionally-coordinated nets: Emergence of human-level
  coordination in learning to play starcraft combat games.
\newblock {\em arXiv preprint arXiv:1703.10069}, 2017.

{[14]}\enspace T.~Rashid, M.~Samvelyan, C.~S. de~Witt, G.~Farquhar, J.~Foerster, and
  S.~Whiteson.
\newblock Qmix: Monotonic value function factorisation for deep multi-agent
  reinforcement learning.
\newblock {\em arXiv preprint arXiv:1803.11485}, 2018.

{[15]}\enspace M.~Samvelyan, T.~Rashid, C.~S. de~Witt, G.~Farquhar, N.~Nardelli, T.~G.~J.
  Rudner, C.-M. Hung, P.~H.~S. Torr, J.~Foerster, and S.~Whiteson.
\newblock {The} {StarCraft} {Multi}-{Agent} {Challenge}.
\newblock {\em CoRR}, abs/1902.04043, 2019.

{[16]}\enspace S.-S. Shai, S.~Shammah, and A.~Shashua.
\newblock "safe, multi-agent, reinforcement learning for autonomous driving.".
\newblock {\em arXiv preprint arXiv:1610.03295}, 2016.

{[17]}\enspace S.~Sukhbaatar, R.~Fergus, et~al.
\newblock Learning multiagent communication with backpropagation.
\newblock In {\em Advances in Neural Information Processing Systems}, pages
  2244--2252, 2016.
  
{[18]}\enspace P.~Sunehag, G.~Lever, A.~Gruslys, W.~M. Czarnecki, V.~Zambaldi, M.~Jaderberg,
  M.~Lanctot, N.~Sonnerat, J.~Z. Leibo, K.~Tuyls, et~al.
\newblock Value-decomposition networks for cooperative multi-agent learning.
\newblock {\em arXiv preprint arXiv:1706.05296}, 2017.

{[19]}\enspace A.~Tampuu, T.~Matiisen, D.~Kodelja, I.~Kuzovkin, K.~Korjus, J.~Aru, J.~Aru, and
  R.~Vicente.
\newblock Multiagent cooperation and competition with deep reinforcement
  learning.
\newblock {\em PloS one}, 12(4):e0172395, 2017.

{[20]}\enspace M.~Tan.
\newblock "multi-agent reinforcement learning: Independent vs. cooperative
  agents.".
\newblock In {\em Proceedings of the tenth international conference on machine
  learning.} IEEE, 1993.

{[21]}\enspace T.~Jaakkola, M.~I. Jordan, and S.~P. Singh.
\newblock Convergence of stochastic iterative dynamic programming algorithms.
\newblock In {\em Advances in neural information processing systems}, pages
  703--710, 1994.

{[22]}\enspace F.~S. Melo.
\newblock Convergence of q-learning: A simple proof.
\newblock {\em Institute Of Systems and Robotics, Tech. Rep}, pages 1--4, 2001.

\end{document}